\documentclass{article}

\usepackage{microtype}
\usepackage{graphicx}

\usepackage{booktabs} 

\usepackage{hyperref}

\usepackage[accepted]{icml2026}

\usepackage{amsmath}
\usepackage{amssymb}
\usepackage{amsthm}
\usepackage{mathtools}
\usepackage{booktabs}

\usepackage{multirow}
\usepackage{makecell} 
\usepackage{tabularx}
\usepackage[capitalize,noabbrev]{cleveref}
\usepackage{pifont} 

\theoremstyle{plain}
\newtheorem{theorem}{Theorem}[section]
\newtheorem{proposition}[theorem]{Proposition}
\newtheorem{lemma}[theorem]{Lemma}
\newtheorem{corollary}[theorem]{Corollary}
\theoremstyle{definition}
\newtheorem{definition}[theorem]{Definition}
\newtheorem{assumption}[theorem]{Assumption}
\theoremstyle{remark}

\usepackage[textsize=tiny]{todonotes}

\usepackage{amsmath,amssymb,amsthm}
\usepackage{algorithm,algorithmic}
\usepackage{hyperref}
\usepackage{cleveref}
\usepackage{booktabs}
\usepackage{graphicx}
\usepackage{subcaption}
\usepackage{enumitem}

\usepackage{subcaption}

\usepackage[table]{xcolor} 
\usepackage{colortbl}
\usepackage[table,xcdraw]{xcolor}

\newcommand{\EE}{\mathbb{E}}

\definecolor{DeepCyan}{RGB}{38, 186, 196}

\definecolor{lxs}{RGB}{200,0,0}

\icmltitlerunning{Understanding Agent Scaling in LLM-Based Multi-Agent Systems via Diversity}

\begin{document}

\twocolumn[







\icmltitle{Understanding Agent Scaling in LLM-Based Multi-Agent Systems via Diversity}





\begin{icmlauthorlist}
\icmlauthor{Yingxuan Yang}{yyy}
\icmlauthor{Chengrui Qu}{sch}
\icmlauthor{Muning Wen}{yyy}
\icmlauthor{Laixi Shi}{xxx}
\icmlauthor{Ying Wen}{yyy}
\icmlauthor{Weinan Zhang}{yyy}
\icmlauthor{Adam Wierman}{sch}
\icmlauthor{Shangding Gu*}{comp}
\end{icmlauthorlist}

\icmlaffiliation{yyy}{Shanghai Jiao Tong University,}
\icmlaffiliation{sch}{California Institute of Technology,}
\icmlaffiliation{xxx}{ Johns Hopkins University,}
\icmlaffiliation{comp}{UC Berkeley}

\icmlcorrespondingauthor{}{shangding.gu@berkeley.edu}

\icmlkeywords{Machine Learning, ICML}

\vskip 0.3in
]

\printAffiliationsAndNotice{} 

\begin{abstract}
LLM-based multi-agent systems (MAS) have emerged as a promising approach to tackle complex tasks that are difficult for individual LLMs. A natural strategy is to scale performance by increasing the number of agents; however, we find that such scaling exhibits strong diminishing returns in homogeneous settings, while introducing heterogeneity (e.g., different models, prompts, or tools) continues to yield substantial gains. This raises a fundamental question: \emph{what limits scaling, and why does diversity help?}
We present an information-theoretic framework showing that MAS performance is bounded by the intrinsic task uncertainty, not by agent count. We derive architecture-agnostic bounds demonstrating that improvements depend on how many \emph{effective channels} the system accesses. Homogeneous agents saturate early because their outputs are strongly correlated, whereas heterogeneous agents contribute complementary evidence. We further introduce $K^*$, an \emph{effective channel count} that quantifies the number of effective channels without ground-truth labels. Empirically, we show that heterogeneous configurations consistently outperform homogeneous scaling: 2 diverse agents can match or exceed the performance of 16 homogeneous agents. Our results provide principled guidelines for building efficient and robust MAS through diversity-aware design. Code and Dataset are available at the link: \url{https://github.com/SafeRL-Lab/Agent-Scaling}.
\end{abstract}

\section{Introduction}

Large language models (LLMs) have achieved remarkable performance across diverse tasks, including reasoning, coding, and open-domain question answering~\citep{wei2022chain, achiam2023gpt}. However, individual LLMs still struggle with complex problems that require multi-step reasoning, diverse perspectives, or complementary expertise~\citep{huang2023large}. To address these limitations, LLM-based multi-agent systems (MAS) have emerged as a promising paradigm. By orchestrating multiple LLM agents through communication, coordination, or aggregation mechanisms, MAS can tackle challenges that are difficult for single models~\citep{wu2024autogen, hong2024metagptmetaprogrammingmultiagent, du2023improvingfactualityreasoninglanguage}. Recent studies have demonstrated that multi-agent collaboration can yield substantial improvements over single-agent baselines on tasks ranging from software engineering~\citep{qian2024chatdev} to scientific reasoning~\citep{guo2024large}.

\begin{figure}[t]
    \centering
    \includegraphics[width=0.9\linewidth]{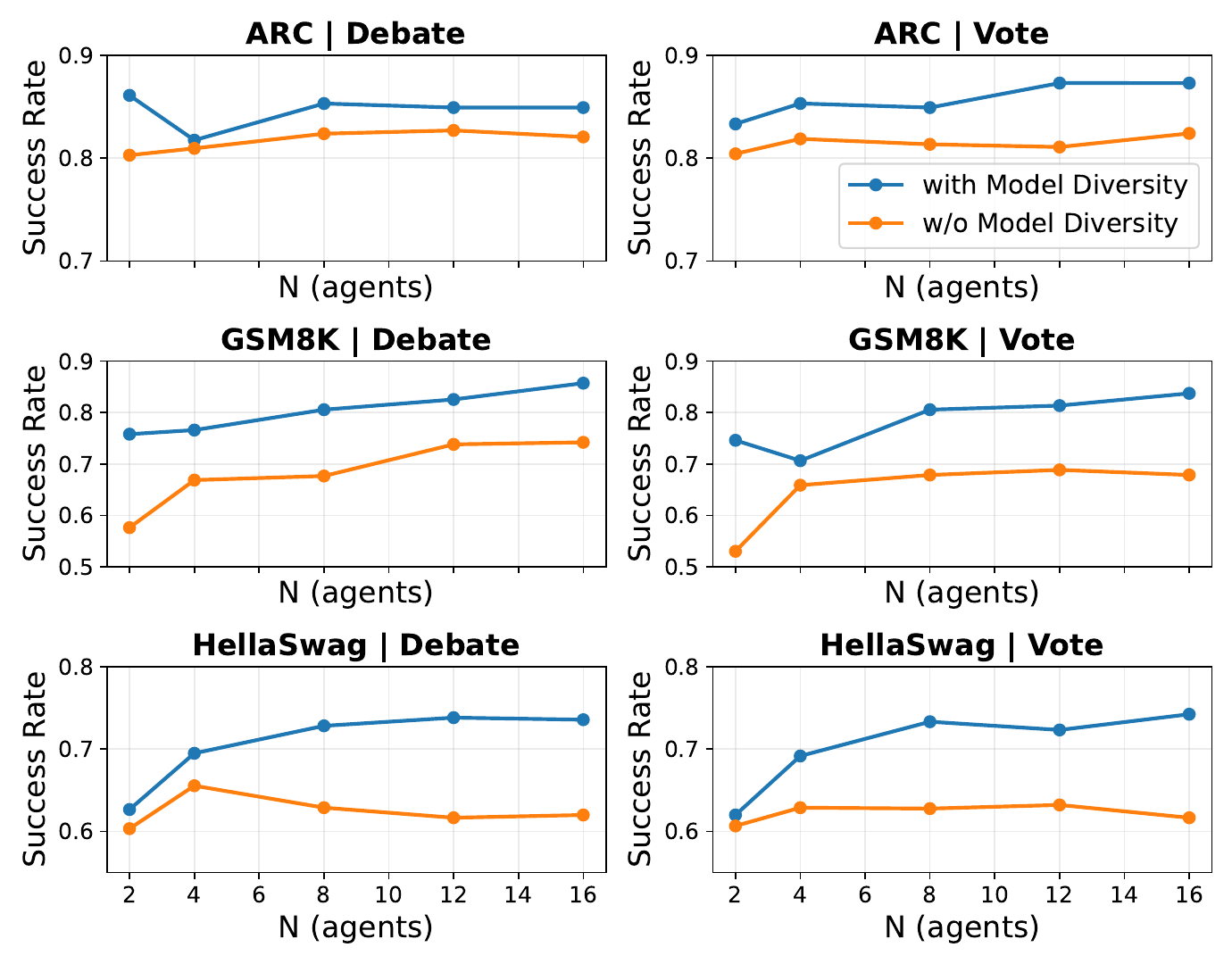}
    \vspace{-0.4cm}
    \caption{
        Effect of model diversity. We compare a mixture of three LLMs (Qwen-2.5-7B, Llama-3.1-8B, Mistral-7B) with the average of independent single-LLM runs.
    }
    \label{fig:mixture_vs_independent}
    \vspace{-0.6cm}
\end{figure}

\begin{figure*}[t]
    \centering
    \includegraphics[width=0.98\linewidth]{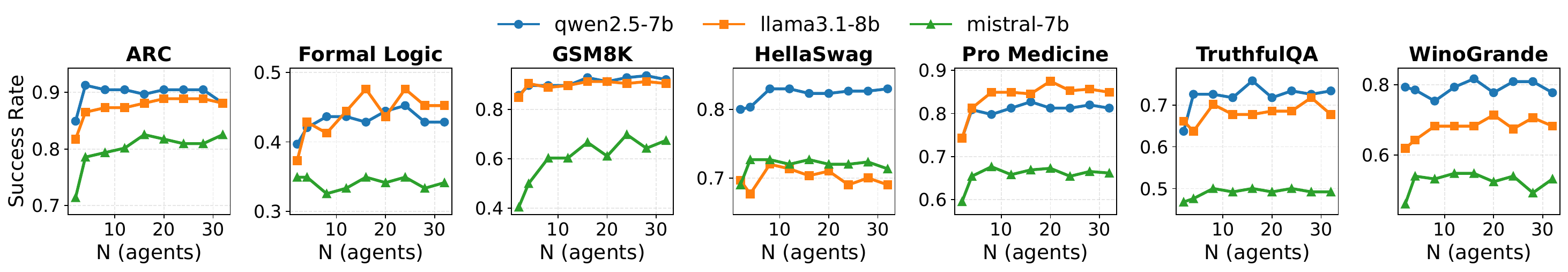}
    \vspace{-0.3cm}
    \caption{
     Scaling behavior of homogeneous multi-agent voting. Success rate versus agent count N on seven tasks for three base models. Performance improves with N but saturates, indicating clear diminishing marginal gains at larger agent counts.
    }    
    \label{fig:homogeneous_scaling}
    \vspace{-0.2cm}
\end{figure*}

Given the effectiveness of multi-agent systems, a natural question arises: \emph{can we improve MAS performance simply by scaling the number of agents?} Intuitively, one might expect ensemble-style gains from aggregating more agent outputs~\citep{li2024agentsneed, wang2023selfconsistencyimproveschainthought}. However, recent work~\citep{kim2025sciencescalingagentsystems} and our experiments reveal a more nuanced picture. As shown in Figure~\ref{fig:homogeneous_scaling}, scaling homogeneous agents (identical models, prompts, and configurations) exhibits strong diminishing returns: accuracy improves at small agent counts, but the marginal gain per additional agent, rapidly collapses toward zero. This suggests that simply adding more homogeneous agents (or allocating more test-time compute) does not reliably introduce new \emph{usable evidence} into the system, but may instead produce increasingly redundant trajectories.

In contrast, our experiments (Figure~\ref{fig:mixture_vs_independent}) show that introducing \emph{diversity} yields sustained performance improvements. Here, diversity broadly refers to heterogeneity in agent configurations, such as backbone models, prompts or personas, and tool access, which empirically leads to more complementary, rather than redundant, information being introduced into the system. As a result, diverse systems can outperform homogeneous ones even with substantially fewer agent calls~\citep{wang2024mixture, zhang2024diversity, qian2025scaling}. Motivated by these observations, we ask: \textbf{what fundamentally limits scaling, and why does diversity help?}

We hypothesize that the primary bottleneck arises from \emph{correlation among agent outputs}. Higher correlation induces greater redundancy, reducing the number of effective channels and leading to performance saturation~\citep{chen2024llm, choi2025debate}. This intuition is illustrated in Figure~\ref{fig:info_coverage}, where heterogeneous agents provide complementary coverage and better information processing diversity compared to homogeneous systems~\citep{yuen2025intrinsic, tang2025importance}. To formalize this intuition, we develop an information-theoretic framework that characterizes MAS performance in terms of \emph{effective channels}, the number of independent, non-redundant reasoning paths present in agent outputs, rather than raw agent count. For example, two agents that reason in nearly identical ways contribute only one effective channel, whereas two agents that follow genuinely different reasoning paths contribute two. Our analysis reveals that performance is bounded by intrinsic task uncertainty, and improvements depend on how many effective channels the system accesses.

Based on this framework, we introduce $K^*$, a label-free metric that quantifies effective channels without requiring ground-truth labels. Empirically, we demonstrate that heterogeneous configurations consistently outperform homogeneous scaling: with only \textbf{2 diverse agents}, we match or exceed the performance of \textbf{16 homogeneous agents}, achieving significant improvement across seven benchmarks.

Although diminishing returns in scaling have been observed empirically~\citep{wang2024a, kim2025sciencescalingagentsystems}, a unified theoretical framework explaining why and when this phenomenon occurs across different MAS workflows, such as voting~\citep{wang2023selfconsistencyimproveschainthought}, debate~\citep{du2023improvingfactualityreasoninglanguage, 10.5555/3692070.3693020}, and centralized orchestration~\citep{hong2024metagptmetaprogrammingmultiagent}, remains lacking. Existing studies offer limited theoretical insight into how evidence accumulation is affected by agent redundancy as the system scales.
To address this gap, we provide a unified information-theoretic explanation for diminishing returns in LLM-based MAS. Our contributions are summarized as follows:
\vspace{-0.3cm}
\begin{itemize}[leftmargin=1.5em] \setlength{\itemsep}{-0.1em}
    \item We derive architecture-independent performance bounds, demonstrating that MAS effectiveness is constrained by the intrinsic task uncertainty $H(Y|X)$, and that improvements arise from increasing the number of effective channels rather than scaling the agent count.
    \item We analyze representative MAS paradigms (vote, debate), showing that homogeneous configurations quickly saturate due to highly correlated evidence, whereas heterogeneity effectively reduces redundancy and expands the system's capacity for effective channels.
    \item We introduce $K^*$, an effective channel count that quantifies the number of non-redundant information sources in agent outputs. We empirically validate that $K^*$ tracks performance and provides principled guidelines for diversity-driven MAS design.
\end{itemize}

\section{Related Works}
\paragraph{Information-Theoretic Analysis of LLM Reasoning.}
Recent work has begun applying information theory to understand LLM behavior. \citet{ton2024understanding} quantify information gain at each chain-of-thought step, showing that effective reasoning requires each step to contribute new information. \citet{gan2025rethinking} analyze cascading failures through information loss accumulation: when $I(t_\ell; r_\ell)$ grows super-linearly, conditional entropy increases rather than decreases. In multi-agent settings, \citet{riedl2025emergent} use Time-Delayed Mutual Information to detect coordination vs.~mere information sharing, and \citet{chang2025multi} track when agent dialogues converge vs.~maintain distributed information. 
However, these works focus on \emph{characterizing} information flow patterns rather than \emph{explaining} why diversity constraints emerge or \emph{deriving} performance bounds. In contrast to prior work that primarily measures or characterizes information flow in LLM reasoning, we use information theory to \emph{explain} diminishing returns via formal limits.

\vspace{-0.3cm}
\paragraph{LLM-based Multi-Agent Systems.}
LLM–based MAS instantiate multiple interacting LLM agents to perform compound inference through communication, coordination, or aggregation mechanisms \citep{xi2023risepotentiallargelanguage,wang2024a,guo2024large}. Existing designs span independent sampling and voting schemes related to self-consistency \cite{wang2023selfconsistencyimproveschainthought}, decentralized debate and role-playing frameworks \cite{du2023improvingfactualityreasoninglanguage,10.5555/3692070.3693020,li-etal-2024-improving-multi,li2024agentsneed,camel}, centralized orchestration frameworks such as AutoGen \cite{wu2024autogen} and MetaGPT \cite{hong2024metagptmetaprogrammingmultiagent}, as well as hybrid, evolving, or self-improving coordination strategies \cite{dang2025multiagent,zhao2025sirius}. \citet{cemri2025why} identify systematic failure modes across multi-agent systems. Taken together, these findings suggest that MAS performance is influenced by multiple design factors, among which we specifically focus on the role of agent diversity.

\vspace{-0.3cm}
\paragraph{Empirical Studies of MAS Scaling and Diversity.}
It has been shown that naïvely scaling the number of agents yields limited benefits when agent behaviors are homogeneous \cite{wang2024a,chen2024llm}, across majority voting \cite{qian2025scaling}, debate \cite{choi2025debate}, and more general coordination mechanisms \cite{kim2025sciencescalingagentsystems}.

In contrast, a growing body of empirical evidence highlights the central role of diversity in MAS. 
\citet{zhang2024diversity} demonstrates that diversity leads to higher success rates in software engineering agents, while \citet{wang2024mixture} finds that heterogeneous
ensembles outperform homogeneous ones. \citet{wu2025hidden} argue that preserving disagreement is preferable to enforcing early consensus. Related work    
shows that diversity benefits depend on task complexity \citep{tang2025importance} and that persona-based diversification has limitations \citep{samuel2024personagym,   
taillandier2025integrating}. However, these findings are restricted to specific diversity forms and narrow settings. We provide a unified theoretical and empirical      
analysis across multiple diversity types. 

\begin{figure*}[t]
\centering
\includegraphics[width=\linewidth]{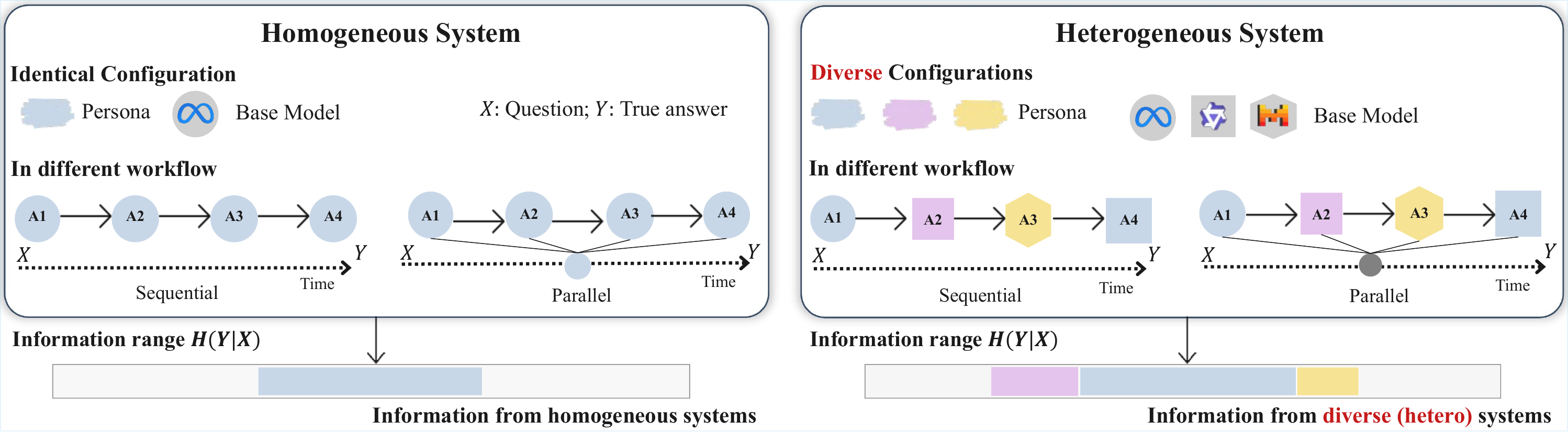}
\vspace{-0.5cm}
\caption{
A comparison between homogeneous and heterogeneous systems. In a homogeneous system, agents with identical configurations result in redundant behavior and limited information coverage. In contrast, heterogeneous agents, through diverse configurations (e.g., varying models or personas), provide complementary coverage and better diversity in the information processed, allowing for more effective problem-solving across different workflows.
}
\label{fig:info_coverage}
\vspace{-0.2cm}
\end{figure*}

\section{Problem Formulation}
\label{sec:preliminaries}

This section formalizes the notion of information flow in LLM-based multi-agent systems and establishes the theoretical foundations for understanding scaling behavior.

We first define the system setup, then introduce the key quantity that governs MAS performance: \emph{usable evidence}. Finally, we derive upper bounds showing that achievable information gain is determined by agent diversity.

\subsection{LLM-based Multi-Agent Systems}
\label{sec:mas-definition}

We begin by formally defining the class of systems we study.

\begin{definition}[LLM-based Multi-Agent System]
\label{def:llm-mas}
An \emph{LLM-based multi-agent system} consists of $N$ agents, each characterized by a \emph{configuration} that specifies its backbone model, system prompt or persona, decoding strategy, and tool access. Given a task input $X$, the system executes a total of $n$ agent calls through a specified workflow (e.g., parallel voting, sequential debate) and aggregates the outputs to produce a final answer.
\end{definition}

\vspace{-0.2cm}
\noindent\textbf{Notation.} We distinguish the \emph{number of agents} $N$ and the \emph{number of agent calls} $n$. In single-round workflows such as majority voting, $n = N$. In multi-round workflows such as debate with $R$ rounds, $n= N \times R$.
This distinction is important because our analysis focuses on how much information is extracted, regardless of which agent produces it. Agent configuration types are formally defined in Section~\ref{sec:config-types}.

\subsection{Usable Evidence and Information Budget}
\label{sec:info-budget}

Consider a task with input $X \in \mathcal{X}$ and ground-truth answer $Y \in \mathcal{Y}$. During inference, the MAS executes $n$ agent calls and produces a dialogue transcript:
\begin{equation}
Z_{1:n} = (Z_1, \ldots, Z_n),
\label{eq:transcript}
\end{equation}
where each output $Z_i$ may depend on the input $X$ and all preceding outputs $Z_{<i} = (Z_1, \ldots, Z_{i-1})$.

The central question is: \emph{how much information about the answer $Y$ can the system extract from its agent calls?} We quantify this through the conditional mutual information:
\vspace{-0.1cm}
\begin{align}\label{eq:imas}
  I_{\mathrm{MAS}}(n)
  ~&:=~
  I(Z_{1:n};Y\mid X)
  \nonumber\\
  ~&=~
  H(Y\mid X)-H(Y\mid X,Z_{1:n}).
\end{align}
This quantity, which we call \emph{usable evidence}, measures the reduction in uncertainty about $Y$ achieved by observing the transcript, beyond what is already contained in the input $X$.

To understand how usable evidence accumulates, let $\Delta_i := I(Z_i;Y\mid X,Z_{<i})$ denote the \emph{incremental contribution} of the $i$-th call, i.e., the new information it provides given all previous outputs. By the chain rule for mutual information:
\vspace{-0.2cm}
\begin{equation}
I_{\mathrm{MAS}}(n)=\sum_{i=1}^n \Delta_i.
\label{eq:delta_def}
\end{equation}
\vspace{-0.5cm}

This decomposition shows that MAS performance depends not on the total number of calls $n$, but on how much \emph{non-redundant} evidence each call contributes. If agents produce highly correlated outputs, the incremental contributions $\Delta_i$ diminish rapidly, leading to saturation. As illustrated in Figure~\ref{fig:info_coverage}, heterogeneous agents provide complementary coverage and better diversity in information processing compared to homogeneous configurations.

The following theorem gives an upper bound on the achievable information:

\begin{theorem}[Finite Information Budget]
\label{thm:finite_budget}
For any transcript $Z_{1:n}$,
\begin{equation}
I_{\mathrm{MAS}}(n)\le H(Y\mid X).
\label{eq:finite_budget}
\end{equation}
\end{theorem}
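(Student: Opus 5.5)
The plan is to read the bound directly off the identity in \eqref{eq:imas}: usable evidence is the difference $H(Y\mid X)-H(Y\mid X,Z_{1:n})$, so it suffices to show that the subtracted term is nonnegative, i.e.\ $H(Y\mid X,Z_{1:n})\ge 0$.

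For a discrete answer space $\mathcal{Y}$, which is the natural setting here (final answers are labels, multiple-choice options, or strings from a countable set), this is immediate. Conditional Shannon entropy is an average over $(x,z_{1:n})$ of entropies $H(Y\mid X=x,Z_{1:n}=z_{1:n})=-\sum_y p(y\mid x,z_{1:n})\log p(y\mid x,z_{1:n})$. Each summand is nonnegative because $0\le p\le 1$.

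The steps, in order, are:
\begin{enumerate}
\item Use the chain-rule expansion $I(Z_{1:n};Y\mid X)=H(Y\mid X)-H(Y\mid X,Z_{1:n})$. This is valid whenever $H(Y\mid X)<\infty$, e.g.\ for finite $\mathcal{Y}$. If $H(Y\mid X)=\infty$, the bound is trivial.
\item Invoke nonnegativity of discrete conditional entropy to conclude $I_{\mathrm{MAS}}(n)\le H(Y\mid X)$.
\end{enumerate}
Nothing about the workflow, the dependence of $Z_i$ on $Z_{<i}$, or $n$ is used, which is exactly why the bound is architecture-agnostic.

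As an equivalent sanity check, the same bound follows from the decomposition \eqref{eq:delta_def}: $\sum_i \Delta_i = I(Z_{1:n};Y\mid X)\le I(Y;Y\mid X)=H(Y\mid X)$. The inequality holds because conditional mutual information is maximized when the observation determines $Y$, which one can justify by data processing along the chain $Y\to (Y,Z_{1:n})\to Z_{1:n}$ given $X$.

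The only real obstacle is measure-theoretic. If $\mathcal{Y}$ were continuous, differential entropy could be negative and the statement would fail. I would therefore state explicitly that $Y$ is discrete, with $H(Y\mid X)$ taken as Shannon entropy, and restrict the proof to that case. Given that assumption, the remaining steps are routine.
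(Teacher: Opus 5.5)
Your proof is correct and matches the paper's own argument: the appendix proof writes $I(Z_{1:n};Y\mid X)=H(Y\mid X)-H(Y\mid X,Z_{1:n})$ and drops the nonnegative conditional entropy, exactly as in your steps 1--2. Your explicit restriction to discrete $Y$ and your data-processing sanity check are harmless additions that the paper leaves implicit.
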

\vspace{-0.2cm}
\noindent This bound states that no MAS can extract more information about $Y$ than the intrinsic task uncertainty $H(Y|X)$. The practical implication is that scaling benefits plateau once this ceiling is approached, and homogeneous systems may reach saturation much earlier than heterogeneous systems due to redundant evidence.

\subsection{Agent Configuration Types}
\label{sec:config-types}

Agent diversity is operationalized through variations in backbone model, system prompt/persona, decoding strategy, and tool access.
We index these choices by \emph{configuration types}.

\begin{definition}[Agent Configuration Type]
\label{def:config-type}
Each call $i\in\{1,\ldots,n\}$ is associated with a type $b(i)\in\mathcal{B}$.
For each $b\in\mathcal{B}$, define the number of calls
\begin{equation}
m_b \coloneqq \bigl|\{i\in\{1,\ldots,n\}: b(i)=b\}\bigr|,
\quad \sum_{b\in\mathcal{B}} m_b = n.
\label{eq:mb_def}
\end{equation}

\end{definition}

\subsection{Type-Dependent Ceilings Across workflows}
\label{sec:type_upper_bounds}

We next state representative upper bounds showing that, across common MAS workflows, achievable information gain is controlled by the multiset of configuration types.
All formal assumptions and proofs are deferred to Appendix~\ref{sec:appendix}.

\vspace{-0.2cm}
\paragraph{Parallel interaction.}
Let $I_b := I(Z^{(b)};Y\mid X)$ denote the single-call information of type $b$ (see Appendix~\ref{app:parallel} for the full derivation).
Under a standard conditional-independence model for parallel sampling (Assumption~\ref{assump:parallel-ci}),
\begin{equation}
I_{\mathrm{MAS}}^{\mathrm{parallel}}(n)
\;\le\;
H(Y\mid X) \;\wedge\; \sum_{b\in\mathcal{B}} m_b\, I_b.
\label{eq:parallel_upper_main}
\end{equation}

\vspace{-0.2cm}

\paragraph{Sequential interaction.}
Define the maximal per-step contribution of type $b$ by
\vspace{-0.2cm}
\begin{equation}
I_b^{\max}
~:=~
\sup_{z_{<i}}
I(Z_i;Y\mid X,Z_{<i}=z_{<i},\, b(i)=b).
\label{eq:imax_main}
\end{equation}

\vspace{-0.2cm}
Any sequential MAS satisfies
\vspace{-0.2cm}
\begin{equation}
I_{\mathrm{MAS}}^{\mathrm{seq}}(n)
\;\le\;
H(Y\mid X) \;\wedge\; \sum_{b\in\mathcal{B}} m_b\, I_b^{\max}.
\label{eq:sequential_upper_main}
\end{equation}

\vspace{-0.4cm}
Debate is a special case of sequential interaction and inherits the same ceiling.

\vspace{-0.2cm}
\paragraph{From ceilings to compute.}
The bounds above depend on \emph{structural properties} of the MAS, which types are instantiated and how they are composed, rather than on the raw call count $n$.
Since these upper bounds do not depend on $n$, the raw call count is not the right quantity for characterizing MAS performance limits. This motivates us to identify a new quantity, the \emph{effective channel count}, that more directly governs how much usable evidence a MAS can extract.

\section{Why Diversity Matters}
\label{sec:K_theory_practice}

Section~\ref{sec:preliminaries} establishes that MAS performance is bounded by intrinsic task uncertainty and that the upper bounds depend on configuration types rather than the raw call count $n$.
This raises a natural question: what quantity \emph{does} govern how much information a MAS actually extracts? In this section, we introduce the \emph{effective channel count} $K$ to answer this question. We then show why homogeneous scaling often saturates (because $K$ stops growing), while heterogeneous designs can keep improving by increasing the amount of \emph{complementary} evidence (larger $K$, and/or a higher evidence-coverage rate $\alpha$), which leads to a characteristic fast-then-slow gain curve.

\subsection{Effective Channels: From Compute to Usable Evidence}
\label{sec:effective-channels}

An \emph{effective channel} represents one independent source of task-relevant information in the MAS transcript. Intuitively, if two agents produce nearly identical reasoning, they contribute only one effective channel despite consuming two agent calls; if they reason along genuinely different paths, they contribute two. The \emph{effective channel count} $K$ thus captures how many non-redundant information sources the system has, as opposed to the raw number of calls $n$.
To formalize this, we introduce two interrelated concepts: the \emph{complementarity rate} $\alpha$ and the \emph{effective channel count} $K$.

\begin{definition}[Complementarity Rate]
\label{def:alpha}
The \emph{complementarity rate} $\alpha \in (0,1)$ quantifies the probability that a new effective channel uncovers previously missing task-relevant evidence. Formally, $\alpha$ governs the rate at which additional channels reduce residual uncertainty about $Y$.
\end{definition}

Intuitively, $\alpha$ reflects how ``complementary'' the information from different channels is. A high $\alpha$ indicates that each new channel is likely to provide fresh evidence, while a low $\alpha$ suggests substantial overlap with existing information.

\begin{definition}[Effective Channel Representation]
\label{def:eff-ch}
An \emph{effective channel representation} of the transcript $Z_{1:n}$ is a collection of $K$ channels:
\begin{equation}
\tilde Z_{1:K}=(\tilde Z^{(1)},\ldots,\tilde Z^{(K)})
\quad\text{s.t.}\quad
\tilde Z_{1:K}=\phi(Z_{1:n}),
\label{eq:phi_def}
\end{equation}
for some (possibly lossy) aggregation map $\phi$, where $K$ is the \emph{effective channel count}, representing the number of non-redundant information sources in the agent outputs.
\end{definition}

$K$ and $\alpha$ are coupled: increasing diversity (larger $K$) is beneficial only if the new channels provide complementary evidence (captured by $\alpha$). The product $\alpha K$ thus serves as the fundamental quantity governing information recovery, as formalized in Theorem~\ref{thm:lb_saturated}.

Since $\tilde Z_{1:K}$ is a function of $Z_{1:n}$, the data processing inequality implies:
\begin{equation}
I(Z_{1:n};Y\mid X)\;\ge\; I(\tilde Z_{1:K};Y\mid X).
\label{eq:dpi}
\end{equation}

\paragraph{Connecting $K$ and $\alpha$ to recoverable information.}
To formalize the relationship between effective channels and information recovery, we introduce in Appendix~\ref{sec:appendix} a minimal evidence-coverage model (Assumptions~\ref{assump:evidence-bits} and~\ref{assump:coverage}). Under this model, the information recovered from $K$ effective channels with complementarity rate $\alpha$ approaches the intrinsic task uncertainty at a geometric rate:
\begin{theorem}[Geometric Contraction with Effective Channels]
\label{thm:lb_saturated}
Under Assumptions~\ref{assump:evidence-bits} and~\ref{assump:coverage}, the residual uncertainty after observing $K$ effective channels satisfies
\begin{equation}
\boxed{
\begin{aligned}
&H(Y\mid X) - \EE\!\left[I(\tilde Z_{1:K};Y\mid X)\right] \\
&\quad\le\; (1-\alpha)^K \, H(Y\mid X)
\;\le\; e^{-\alpha K} \, H(Y\mid X).
\end{aligned}
}
\label{eq:lower_bound_main}
\end{equation}
Equivalently, the \emph{normalized residual} satisfies
$\EE[H(Y\mid X,\tilde Z_{1:K})]/H(Y\mid X)\le (1-\alpha)^K \le e^{-\alpha K}$.
\end{theorem}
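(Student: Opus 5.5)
The proof will rest on a specific reading of Assumptions~\ref{assump:evidence-bits} and~\ref{assump:coverage}, which are not displayed before the statement. \textbf{Evidence bits:} conditional on $X$, the residual uncertainty $H(Y\mid X)$ decomposes additively over a finite collection of latent evidence atoms $E_1,\dots,E_M$ with weights $w_j\ge 0$ and $\sum_j w_j = H(Y\mid X)$. Once an atom is revealed by some channel, its weight is removed from the residual; concretely, $H(Y\mid X,\tilde Z_{1:K}) \le \sum_j w_j \mathbf{1}\{E_j \text{ uncovered by } \tilde Z_{1:K}\}$. \textbf{Coverage:} each effective channel $\tilde Z^{(k)}$ reveals each atom not yet covered with probability at least $\alpha$, independently across channels (or at least conditionally on the past channels). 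This is exactly the property that makes $\tilde Z^{(k)}$ ``non-redundant'' in the sense of Definition~\ref{def:eff-ch}. If the paper's assumptions are phrased directly as a one-step contraction of the expected residual, the argument below only gets shorter.

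With that model, I would proceed by induction on $K$. Let $R_k := \EE[H(Y\mid X,\tilde Z_{1:k})]$ and $R_0 = H(Y\mid X)$. The one-step claim is $R_k \le (1-\alpha)R_{k-1}$. To prove it, condition on $\tilde Z_{1:k-1}$. Each atom still uncovered stays uncovered after channel $k$ with probability at most $1-\alpha$. Conditioning on more variables cannot increase entropy, so adding $\tilde Z^{(k)}$ never reinstates uncertainty from atoms that are already covered. Taking expectations, and using linearity over the atoms, gives the contraction. Iterating yields $R_K \le (1-\alpha)^K H(Y\mid X)$.

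The boxed inequality then follows from the identity $I(\tilde Z_{1:K};Y\mid X) = H(Y\mid X) - H(Y\mid X,\tilde Z_{1:K})$, which is the same identity as in \eqref{eq:imas}. The outer expectation $\EE$ is over the randomness in the channel/coverage realization, and the identity holds for each realization, so the residual bound transfers directly. The final step, $(1-\alpha)^K \le e^{-\alpha K}$, is the elementary inequality $1-\alpha \le e^{-\alpha}$ raised to the $K$-th power. The normalized form is obtained by dividing by $H(Y\mid X)$, assuming $H(Y\mid X) > 0$; when it equals zero, the statement is trivial.

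The main obstacle is the one-step contraction, specifically justifying that the per-atom coverage events translate into an \emph{entropy} reduction. Entropy is not additive over atoms in general. I would handle this by relying on Assumption~\ref{assump:evidence-bits} to provide the additive upper bound $H(Y\mid X,\tilde Z_{1:k}) \le \sum_j w_j \mathbf{1}\{E_j\text{ uncovered}\}$ at every $k$, with equality at $k=0$. The induction is then carried out on this surrogate potential $\Phi_k := \sum_j w_j \mathbf{1}\{E_j\text{ uncovered after }k\}$: $\EE[\Phi_k] = \sum_j w_j \Pr[E_j\text{ uncovered after }k] \le (1-\alpha)^k \sum_j w_j$. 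This also gives the closed form directly, without needing a stepwise inequality for $R_k$ itself.
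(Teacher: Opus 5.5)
Your final route matches the paper's proof, which is Lemma~\ref{lem:residual} followed by the theorem and is done in one shot rather than by induction. You bound the residual by the surrogate $\Phi_K=\sum_j w_j\mathbf{1}\{E_j\text{ uncovered}\}$. You take expectations atom by atom, using that each atom escapes all $K$ channels with probability at most $(1-\alpha)^K$. You then apply $I(\tilde Z_{1:K};Y\mid X)=H(Y\mid X)-H(Y\mid X,\tilde Z_{1:K})$ and $1-\alpha\le e^{-\alpha}$.

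The one real difference is that you build the additive bound $H(Y\mid X,\tilde Z_{1:K})\le\sum_j w_j\mathbf{1}\{E_j\text{ uncovered}\}$ into your reading of the assumption. The paper's Assumption~\ref{assump:evidence-bits} only posits latent $U=(U_1,\dots,U_M)$ with $H(Y\mid X,U)=0$, the $U_j$ conditionally independent given $X$, and $H(U\mid X)=H(Y\mid X)$. The additive bound is then \emph{derived} from these in four steps:
\begin{enumerate}
\item Sufficiency gives $H(Y\mid X,\tilde Z_{1:K})\le H(U\mid X,\tilde Z_{1:K})$.
\item Subadditivity gives $H(U\mid X,\tilde Z_{1:K})\le\sum_j H(U_j\mid X,\tilde Z_{1:K})$.
\item Each term vanishes when $U_j$ is covered, by Assumption~\ref{assump:coverage}(iii). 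Otherwise it is at most $w_j:=H(U_j\mid X)$, since conditioning reduces entropy.
\item Conditional independence and the matching-scale condition give $\sum_j w_j=H(U\mid X)=H(Y\mid X)$.
\end{enumerate}
This short chain is exactly the resolution of the obstacle you flagged, that entropy is not additive over atoms. It should appear in your proof rather than be assumed.

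You were right to drop the stepwise claim $R_k\le(1-\alpha)R_{k-1}$. Your per-atom justification only contracts $\EE[\Phi_k]$. Since $R_{k-1}$ can lie strictly below $\EE[\Phi_{k-1}]$, that contraction does not transfer to the true residual.
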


\subsection{$K$ as the State Variable of MAS Scaling}
\label{sec:K_state_variable}

The central question in MAS scaling is not whether $n$ increases, but whether $n$ induces growth in the \emph{effective channel count} $K(n)$.
This follows from Section~\ref{sec:preliminaries}: ceilings are fixed by intrinsic uncertainty $H(Y\mid X)$ and structural design (Section~\ref{sec:type_upper_bounds}), while achievability improves with the number of non-redundant channels (Section~\ref{sec:effective-channels}).
\vspace{-0.2cm}

\paragraph{A direct heterog--homog advantage bound.}
Consider two designs under the same compute budget $n$.
Let $(K_{\mathrm{homog}},\alpha_{\mathrm{homog}})$ and $(K_{\mathrm{heterog}},\alpha_{\mathrm{heterog}})$ denote their effective channel counts and coverage rates in the evidence-coverage model.
By Theorem~\ref{thm:lb_saturated}, each design admits a lower bound on recoverable information:
\begin{corollary}[Heterogeneity Advantage]
\label{cor:heterog_homog_gap_B}
Under Assumptions~\ref{assump:evidence-bits} and~\ref{assump:coverage}, the lower bounds on recoverable information for the two designs are:
\begin{align}
\EE\big[I_{\mathrm{heterog}}\big] &\;\ge\; H(Y\mid X)\big(1-e^{-\alpha_{\mathrm{heterog}}K_{\mathrm{heterog}}}\big), \label{eq:lb_heterog}\\
\EE\big[I_{\mathrm{homog}}\big] &\;\ge\; H(Y\mid X)\big(1-e^{-\alpha_{\mathrm{homog}}K_{\mathrm{homog}}}\big). \label{eq:lb_homog}
\end{align}
When $\alpha_{\mathrm{heterog}}K_{\mathrm{heterog}}>\alpha_{\mathrm{homog}}K_{\mathrm{homog}}$, the heterogeneous design enjoys a strictly higher information-recovery guarantee: its lower bound on recoverable information, $H(Y\mid X)(1-e^{-\alpha_{\mathrm{heterog}}K_{\mathrm{heterog}}})$, exceeds the corresponding homogeneous guarantee $H(Y\mid X)(1-e^{-\alpha_{\mathrm{homog}}K_{\mathrm{homog}}})$.
\end{corollary}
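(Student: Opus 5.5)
The plan is to apply Theorem~\ref{thm:lb_saturated} separately to each design and then compare the two resulting bounds using the monotonicity of $t\mapsto 1-e^{-t}$. The corollary is essentially a repackaging of that theorem.

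First, fix one design, say the heterogeneous one, with its channel representation $\tilde Z_{1:K_{\mathrm{heterog}}}=\phi(Z_{1:n})$. Assumptions~\ref{assump:evidence-bits} and~\ref{assump:coverage} hold with parameters $(K_{\mathrm{heterog}},\alpha_{\mathrm{heterog}})$. Theorem~\ref{thm:lb_saturated} then gives
\[
H(Y\mid X)-\EE\!\left[I(\tilde Z_{1:K};Y\mid X)\right]\le e^{-\alpha_{\mathrm{heterog}}K_{\mathrm{heterog}}}H(Y\mid X).
\]
Rearranging yields $\EE[I(\tilde Z_{1:K};Y\mid X)]\ge H(Y\mid X)(1-e^{-\alpha_{\mathrm{heterog}}K_{\mathrm{heterog}}})$.

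Next, pass from the channel representation to the full transcript using the data processing inequality~\eqref{eq:dpi}. This works whether $I_{\mathrm{heterog}}$ is read as the information in $\tilde Z_{1:K}$ or as $I_{\mathrm{MAS}}(n)=I(Z_{1:n};Y\mid X)$. Since~\eqref{eq:dpi} holds pointwise for each realization over which the expectation is taken, it survives the expectation, and we obtain~\eqref{eq:lb_heterog}. The identical argument with $(K_{\mathrm{homog}},\alpha_{\mathrm{homog}})$ gives~\eqref{eq:lb_homog}.

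For the comparison, note that $g(t)=1-e^{-t}$ is strictly increasing on $\mathbb{R}$. Hence $\alpha_{\mathrm{heterog}}K_{\mathrm{heterog}}>\alpha_{\mathrm{homog}}K_{\mathrm{homog}}$ implies $g(\alpha_{\mathrm{heterog}}K_{\mathrm{heterog}})>g(\alpha_{\mathrm{homog}}K_{\mathrm{homog}})$. Multiplying by $H(Y\mid X)$ gives the strict inequality between the two guarantees.

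The only delicate point is the degenerate case $H(Y\mid X)=0$. There the two bounds coincide at zero, so strictness requires assuming $H(Y\mid X)>0$. I would state this explicitly, noting it is implicit in the nontrivial-task setting of Assumption~\ref{assump:evidence-bits}. Beyond that, the main obstacle is interpretive rather than technical: one must make sure both designs share the same $H(Y\mid X)$, which holds because they face the same task $(X,Y)$ under the same budget $n$. One must also stress that the claim compares guarantees (lower bounds), not the realized information, since the corollary makes no assertion that $\EE[I_{\mathrm{heterog}}]>\EE[I_{\mathrm{homog}}]$ itself.
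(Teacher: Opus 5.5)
Your proposal is correct and follows the paper's proof: you apply the saturated lower bound (Theorem~\ref{thm:lb_saturated}, equivalently Corollary~\ref{cor:lb}) to each design separately, then use strict monotonicity of $t\mapsto 1-e^{-t}$ (Lemma~\ref{lem:monotone_alphaK}). Your extra data-processing step is harmless but not needed. Your caveat that strictness requires $H(Y\mid X)>0$ is a genuine point the paper's proof skips, since both guarantees equal zero when $H(Y\mid X)=0$.
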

\noindent This is consistent with our empirical findings: as shown in Figure~\ref{fig:mixture_vs_independent} and Table~\ref{tab:hetero_with_baseline}, heterogeneous configurations consistently recover more task-relevant information than homogeneous ones under matched compute.
The corollary formalizes the intuition that heterogeneity helps by increasing $\alpha K$ through more non-redundant channels or higher complementarity.

\begin{table*}[t]
\centering
\caption{Effect of persona diversity. $\Delta$ denotes improvement from heterogeneity. All agents share the same base model pool (Qwen-2.5-7B, Llama-3.1-8B, and Mistral-7B); only persona assignments differ between Homog and Heterog.} 
\vspace{-0.2cm}
\label{tab:hetero_with_baseline}
\scriptsize
\renewcommand{\arraystretch}{1.05}
\setlength{\tabcolsep}{4pt} %
\begin{tabular}{c c c c c >{\columncolor{blue!10}}c c c >{\columncolor{blue!10}}c | c c c c c >{\columncolor{blue!10}}c c c >{\columncolor{blue!10}}c}
\toprule
\multirow{2}{*}{\textbf{Dataset}} & \multirow{2}{*}{\raisebox{0pt}[5ex][0pt]{\shortstack{\textbf{Single}\\\textbf{Agent}}}}
 & \multirow{2}{*}{\textbf{$N$}} & \multicolumn{3}{c}{\textbf{Vote}} & \multicolumn{3}{c|}{\textbf{Debate}} & \multirow{2}{*}{\textbf{Dataset}} & \multirow{2}{*}{\raisebox{0pt}[5ex][0pt]{\shortstack{\textbf{Single}\\\textbf{Agent}}}} & \multirow{2}{*}{\textbf{$N$}} & \multicolumn{3}{c}{\textbf{Vote}} & \multicolumn{3}{c}{\textbf{Debate}} \\
\cmidrule(lr){4-6} \cmidrule(lr){7-9} \cmidrule(lr){13-15} \cmidrule(lr){16-18}
& & & \text{Homog} & \text{Heterog} & \text{$\Delta$} & \text{Homog} & \text{Heterog} & \text{$\Delta$} & & & & \text{Homog} & \text{Heterog} & \text{$\Delta$} & \text{Homog} & \text{Heterog} & \text{$\Delta$} \\
\midrule
\multirow{5}{*}{\textbf{GSM8K}} 
& \multirow{5}{*}{50.8}
& 2  & 86.5 & 87.3 & +0.8 & 76.2 & 75.4 & -0.8 
& \multirow{5}{*}{\textbf{ARC}}
& \multirow{5}{*}{77.8}
& 2  & 78.6 & 81.8 & +3.2 & 84.9 & 87.3 & +2.4 \\
& & 4  & 84.9 & 88.1 & +3.2 & 73.8 & 79.4 & +5.6 & & & 4  & 79.4 & 85.7 & +6.3 & 79.4 & 84.1 & +4.8 \\
& & 8  & 90.5 & 93.7 & +3.2 & 75.4 & 85.7 & +10.3 & & & 8  & 84.1 & 86.5 & +2.4 & 84.9 & 85.7 & +0.8 \\
& & 12 & 86.5 & 90.5 & +4.0 & 77.8 & 87.3 & +9.5 & & & 12 & 85.7 & 89.7 & +4.0 & 82.5 & 87.3 & +4.8 \\
& & 16 & 89.7 & 92.1 & +2.4 & 83.3 & 88.1 & +4.8 & & & 16 & 84.9 & 88.9 & +4.0 & 84.9 & 84.9 & 0.0 \\
\midrule
\multirow{5}{*}{\shortstack{\textbf{Formal}\\\textbf{Logic}}}
& \multirow{5}{*}{32.0}
& 2  & 45.2 & 48.4 & +3.2 & 34.1 & 38.9 & +4.8 
&\multirow{5}{*}{\shortstack{\textbf{Truthful}\\\textbf{QA}}}
& \multirow{5}{*}{71.8}
& 2  & 74.2 & 77.4 & +3.2 & 71.0 & 77.4 & +6.4 \\
& & 4  & 47.6 & 52.4 & +4.8 & 42.9 & 53.2 & +10.3 & & & 4  & 75.0 & 75.8 & +0.8 & 71.8 & 79.8 & +8.0 \\
& & 8  & 47.6 & 55.6 & +7.9 & 49.2 & 53.2 & +4.0 & & & 8  & 76.6 & 79.0 & +2.4 & 76.6 & 78.2 & +1.6 \\
& & 12 & 48.4 & 57.9 & +9.5 & 48.4 & 54.8 & +6.4 & & & 12 & 75.0 & 79.0 & +4.0 & 73.4 & 79.8 & +6.4 \\
& & 16 & 50.0 & 54.0 & +4.0 & 43.6 & 51.6 & +8.0 & & & 16 & 78.2 & 81.5 & +3.3 & 75.0 & 84.7 & +9.7 \\
\midrule
\multirow{5}{*}{\textbf{HellaSwag}} 
& \multirow{5}{*}{66.1}
& 2  & 62.3 & 73.7 & +11.4 & 50.3 & 75.0 & +24.7 
&\multirow{5}{*}{\shortstack{\textbf{Wino}\\\textbf{grande}}}
& \multirow{5}{*}{57.1}
& 2  & 51.6 & 60.3 & +8.7 & 58.7 & 50.0 & -8.7 \\
& & 4  & 68.7 & 75.3 & +6.6 & 66.0 & 73.0 & +7.0 & & & 4  & 54.0 & 69.1 & +15.1 & 53.2 & 62.7 & +9.5 \\
& & 8  & 70.0 & 79.0 & +9.0 & 69.7 & 76.0 & +6.3 & & & 8  & 57.9 & 69.1 & +11.2 & 61.9 & 69.1 & +7.2 \\
& & 12 & 72.3 & 79.0 & +6.7 & 69.3 & 78.3 & +9.0 & & & 12 & 58.7 & 70.6 & +11.9 & 62.7 & 70.6 & +7.9 \\
& & 16 & 72.0 & 79.9 & +7.9 & 70.3 & 76.4 & +6.1 & & & 16 & 60.3 & 69.8 & +9.5 & 57.9 & 64.3 & +6.4 \\
\midrule
\multirow{5}{*}{\shortstack{\textbf{Pro}\\\textbf{Medicine}}}
& \multirow{5}{*}{68.6}
& 2  & 78.3 & 78.7 & +0.4 & 76.8 & 71.3 & -5.5 
& \multirow{5}{*}{\textbf{Average}}
& \multirow{5}{*}{60.6}
& 2  & 68.1 & 72.5 & \textbf{+4.4} & 64.6 & 67.9 & \textbf{+3.3} \\
& & 4  & 80.5 & 81.6 & +1.1 & 76.8 & 76.5 & -0.3 & & & 4  & 69.9 & 76.1 & \textbf{+6.2} & 66.3 & 72.7 & \textbf{+6.4} \\
& & 8  & 81.3 & 83.5 & +2.2 & 81.6 & 82.7 & +1.1 & & & 8  & 72.6 & 79.6 & \textbf{+7.0} & 71.3 & 75.8 & \textbf{+4.5} \\
& & 12 & 80.2 & 82.7 & +2.5 & 81.3 & 83.8 & +2.5 & & & 12 & 72.4 & 81.0 & \textbf{+8.6} & 70.8 & 77.4 & \textbf{+6.6} \\
& & 16 & 80.5 & 81.8 & +1.3 & 80.5 & 83.3 & +2.8 & & & 16 & 73.6 & 81.1 & \textbf{+7.5} & 70.8 & 76.2 & \textbf{+5.4} \\
\bottomrule
\end{tabular}
\vspace{-0.2cm}
\end{table*}

\vspace{-0.2cm}
\paragraph{Fast-then-slow scaling: the $1-e^{-\alpha K}$ shape.}
Corollary~\ref{cor:lb} implies that recoverable information grows at least as
\begin{equation}
\EE\big[I(\tilde Z_{1:K};Y\mid X)\big] \;\ge\; H(Y\mid X)\big(1-e^{-\alpha K}\big).
\label{eq:info_curve}
\end{equation}
The shape of~\eqref{eq:info_curve} directly predicts diminishing returns: the marginal gain from one additional effective channel satisfies
\begin{equation}
\big(1-e^{-\alpha (K+1)}\big)-\big(1-e^{-\alpha K}\big)
~=~ (1-e^{-\alpha})\,e^{-\alpha K},
\label{eq:marginal_decay}
\end{equation}
which is largest at small $K$ and decays exponentially thereafter.
This yields a clean explanation for the empirically observed \emph{fast-then-slow} improvement pattern as the number of agents $n$ increases:
early gains occur when $K(n)$ is still growing, while later gains diminish once $K(n)$ saturates.

\subsection{Measuring Effective Channels Without Labels: $K^*$}
\label{sec:Kstar}

The effective channel count $K$ cannot be computed directly at inference time because it depends on the unknown ground-truth $Y$. We therefore introduce $K^*$, a \emph{label-free proxy} that estimates the number of effective channels from agent outputs in embedding space: $K^*$ is large when outputs are diverse and approaches 1 when outputs are similar.

\paragraph{Definition.}
Let $\mathrm{Emb}(\cdot)$ be an embedding model.
Given outputs $\{Z_i\}_{i=1}^n$, define normalized embeddings
\begin{equation}
\hat{\mathbf{z}}_i \;:=\; \frac{\mathrm{Emb}(Z_i)}{\|\mathrm{Emb}(Z_i)\|_2}\in\mathbb{R}^d,
\end{equation}
and the cosine-similarity Gram matrix $G\in\mathbb{R}^{n\times n}$:
\begin{equation}
G_{ij} \;:=\; \langle \hat{\mathbf{z}}_i,\hat{\mathbf{z}}_j\rangle.
\end{equation}
Trace-normalize to obtain $\rho:=G/\mathrm{Tr}(G)$ with $\mathrm{Tr}(\rho)=1$, and let $\{\lambda_j\}_{j=1}^n$ be the eigenvalues of $\rho$.
We define the entropy effective rank
\begin{equation}
K^* \coloneqq 2^{H(\rho)},
\quad \text{where} \quad
H(\rho) = -\sum_{j=1}^n \lambda_j \log_2 \lambda_j.
\label{eq:Kstar_def}
\end{equation}

\vspace{-0.3cm}

\paragraph{Interpretation.}
$K^*$ counts how many ``independent directions'' the agent outputs span in embedding space.
When all agents produce nearly identical outputs (e.g., paraphrases of the same reasoning), their embeddings are collinear and $K^* \approx 1$: the system effectively has a single information channel.
When agents produce genuinely different outputs whose embeddings point in different directions with roughly equal magnitude, $K^*$ grows toward $n$: each agent contributes a distinct channel.
For example, if four agents all solve a math problem using the same algebraic approach with minor wording differences, their outputs cluster in one direction and $K^*\approx 1$. If instead the agents employ genuinely different strategies (e.g., algebraic manipulation, geometric reasoning, and numerical estimation), $K^*$ will be notably larger than~$1$, reflecting that the system draws on multiple independent lines of evidence.
Formally, $1\le K^*\le n$. $K^*$ reaches its maximum $n$ when the normalized Gram matrix $\rho$ has a uniform spectrum (all eigenvalues equal to $1/n$), which occurs when outputs are orthogonal and carry equal energy.
Proofs of these properties are given in Appendix~\ref{sec:appendix}.

\section{Experiments}
\label{sec:experiment}
This section validates 3 core claims:  
(i) scaling \emph{homogeneous} MAS exhibits diminishing returns,  
(ii) \emph{heterogeneity} consistently outperforms pure scaling under matched compute,  
and (iii) performance gains are governed by the \emph{number of effective channels} rather than the raw agent count.

\begin{figure*}[t]
\centering

\begin{subfigure}[b]{0.48\textwidth}
    \centering
    \includegraphics[width=\textwidth]{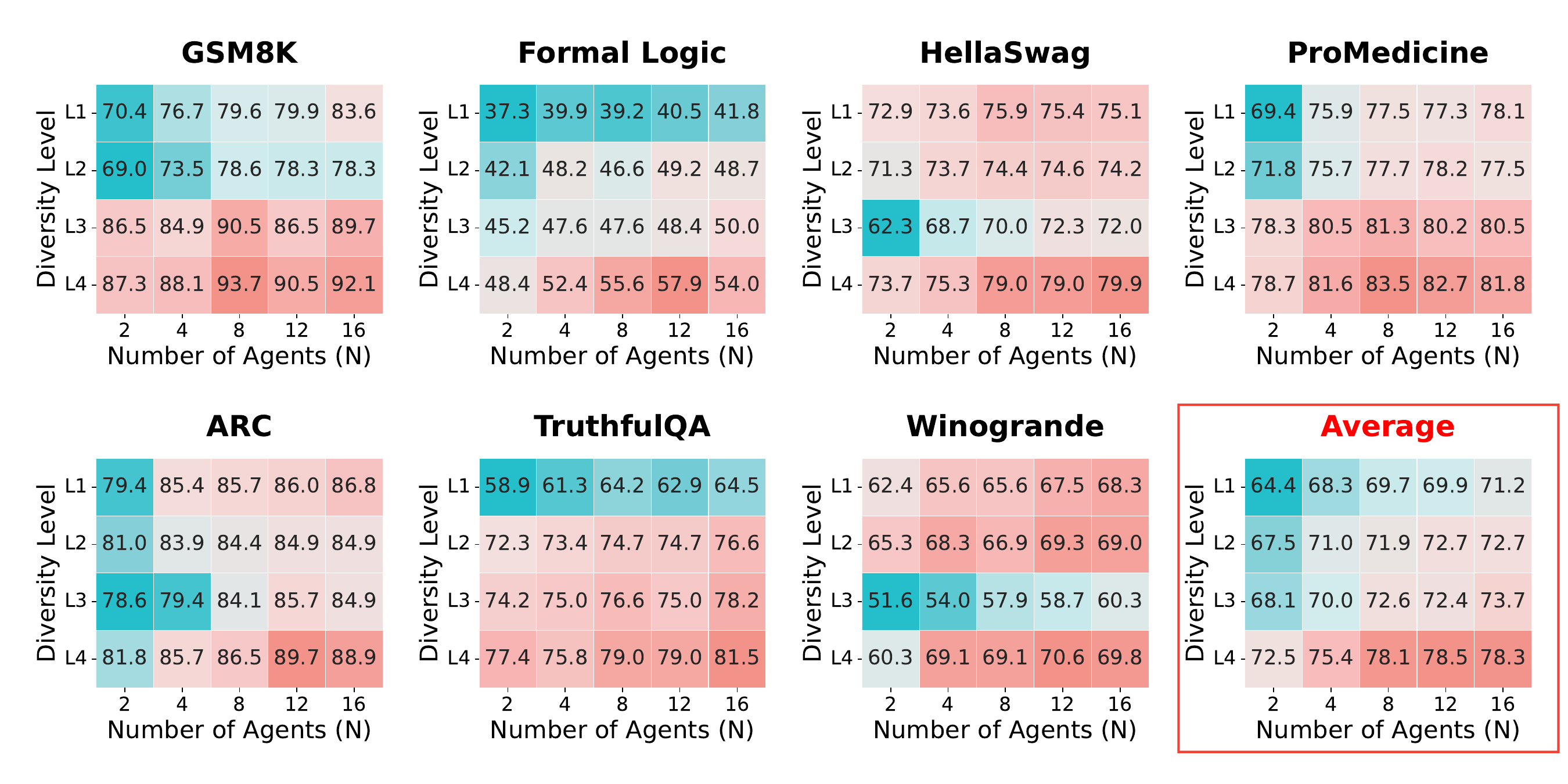}
    \vspace{-0.5cm}
    \caption{Vote}
    \label{Fig3-1}
\end{subfigure}
\quad
\begin{subfigure}[b]{0.48\textwidth}
    \centering
    \includegraphics[width=\textwidth]{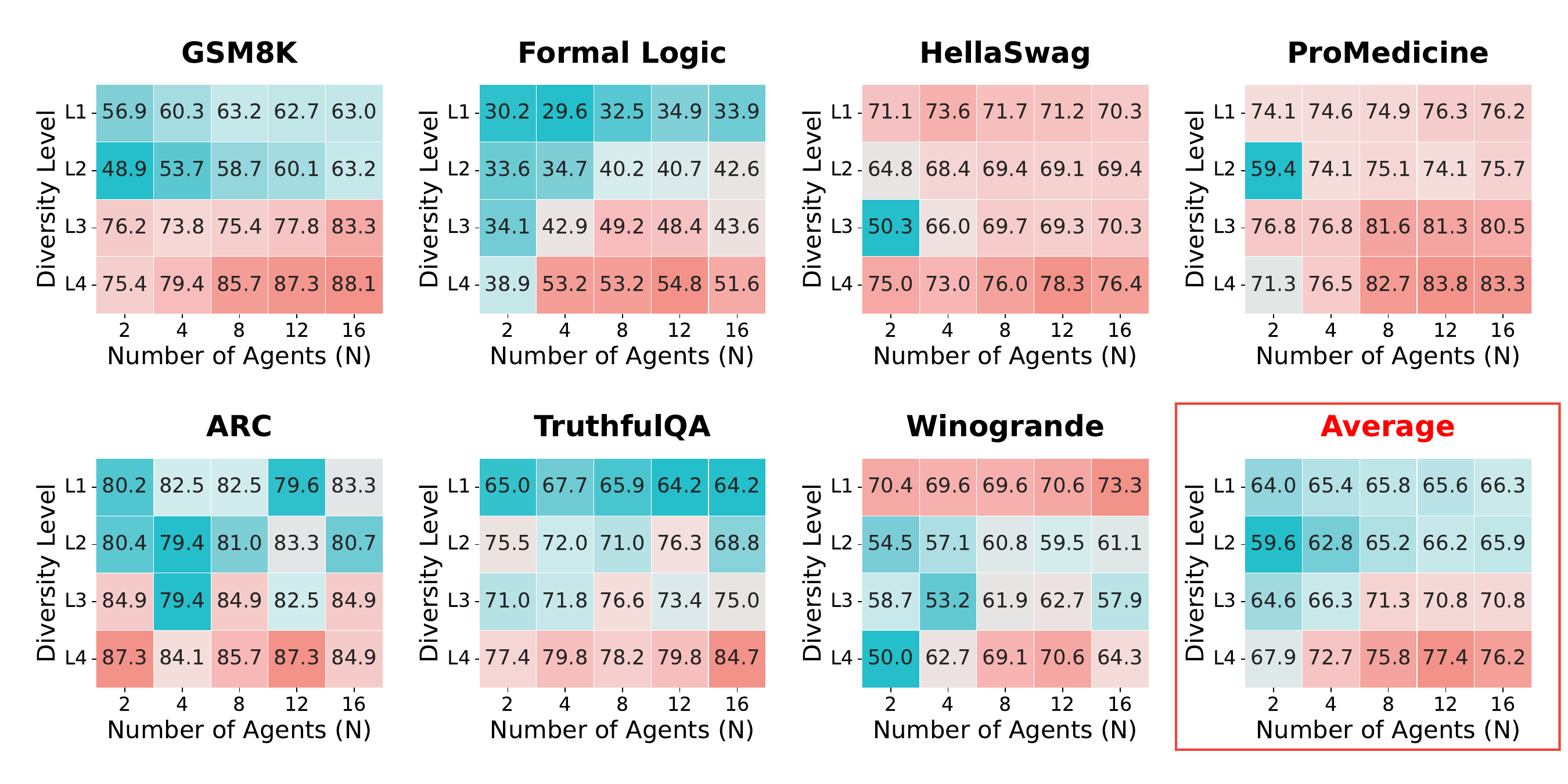}
    \vspace{-0.5cm}
    \caption{Debate}
    \label{Fig3-2}
\end{subfigure}
\vspace{-0.3cm}
\caption{
Diversity analysis of the \textbf{Vote} and \textbf{Debate} mechanisms across all datasets.
Each subfigure corresponds to one dataset and visualizes absolute performance as a $4 \times 5$ heatmap, where rows represent progressively enriched diversity layers (L1--L4) and columns denote the number of agents $N$. Colors indicate success rate values: \textcolor{DeepCyan}{cyan (lowest)} to \textcolor{red}{red (highest)}.
}
\label{fig:Heatmap}
\vspace{-0.3cm}
\end{figure*}

\subsection{Experimental Setup}

\paragraph{Tasks.}
We consider a diverse set of reasoning and knowledge benchmarks, including GSM8K~\citep{cobbe2021gsm8k}, ARC~\citep{allenai:arc}, Formal Logic~\citep{hendrycks2021ethics,hendryckstest2021}, TruthfulQA~\citep{lin2022truthfulqameasuringmodelsmimic}, HellaSwag~\citep{zellers2019hellaswag}, WinoGrande~\citep{ai2:winogrande}, and Pro Medicine~\citep{hendrycks2021ethics,hendryckstest2021}.
These tasks span arithmetic reasoning, formal deduction, commonsense reasoning, and domain knowledge, covering both deterministic and ambiguous settings.
\vspace{-0.2cm}

\paragraph{Models.}
Agents are instantiated using three open-source LLMs: Qwen-2.5-7B~\citep{qwen2.5}, Llama-3.1-8B~\citep{grattafiori2024llama3herdmodels}, and Mistral-7B~\citep{jiang2023mistral7b}.
In the \emph{single-model} setting, all agents within a MAS share the same base model; in the \emph{MIX} setting, agents within a single MAS can use different base models, enabling model-level heterogeneity.
\vspace{-0.2cm}

\paragraph{MAS Workflows.}
We consider two representative collaboration mechanisms~\citep{choi2025debate}: 
\textbf{Vote}, where agents independently generate answers and a majority decision is taken after 1 round, and 
\textbf{Debate}, where agents interact sequentially for 4 rounds before producing a final answer. 
For each mechanism, we vary the number of agents \(N \in \{2,4,8,12,16\}\). 
Compute budgets are matched by fixing the total number of agent calls.
\vspace{-0.2cm}

\paragraph{Diversity Configurations.}
We organize agent heterogeneity into four progressively enriched layers to isolate the contribution of each diversity source:
\vspace{-0.4cm}
\begin{itemize}[leftmargin=2em] \setlength{\itemsep}{-0.2em}
\item \textbf{L1: No Diversity.} All agents share the same base model and the same default system prompt (no persona). This serves as the homogeneous baseline. Results are averaged over the three single-model runs.
\item \textbf{L2: Persona Diversity Only.} All agents share the same base model, but each agent receives a distinct persona prompt (e.g., ``You are an expert mathematician'' vs.\ ``You are a careful logician''). Results are averaged over the three single-model runs.
\item \textbf{L3: Model Diversity Only.} Agents are drawn from different base models (Qwen, Llama, Mistral) but all use the same default system prompt.
\item \textbf{L4: Full Diversity.} Agents differ in both base model and persona prompt, combining model-level and prompt-level heterogeneity.
\end{itemize}
\vspace{-0.3cm}
This controlled design allows us to isolate and compare the contributions of model diversity and persona diversity.

\begin{table}[t]
\centering
\caption{Efficiency gains from diversity. Number of agents needed to match L1 (N=16) baseline. Higher diversity achieves equivalent performance with fewer agents.} 
\vspace{-0.2cm}
\label{tab:efficiency}
\scriptsize
\setlength{\tabcolsep}{5pt}
\begin{tabular}{c|c|c|c|c}
\toprule
\textbf{Method} & \textbf{Config} & \makecell{\textbf{Agents to Match}\\\textbf{L1 (N=16)}} & \makecell{\textbf{Accuracy at}\\\textbf{that N}} & \makecell{\textbf{Peak Accuracy}\\\textbf{(any N)}} \\
\midrule
\multirow{4}{*}{\textbf{Vote}}
& L1 & 16 (baseline) & 65.34 & 65.49 \\
& L2 & 8  & 65.44 & 66.01 \\
& L3 & 4  & 67.29 & 71.54 \\
\rowcolor{green!10}
& L4 & \textbf{2} & 67.71 & \textbf{76.86} \\
\midrule
\multirow{4}{*}{\textbf{Debate}}
& L1 & 16 (baseline) & 65.48 & 65.48 \\
& L2 & 12 & 66.08 & 66.08 \\
& L3 & 4  & 66.26 & 71.33 \\
\rowcolor{green!10}
& L4 & \textbf{2} & 67.90 & \textbf{77.43} \\
\bottomrule
\end{tabular}
\vspace{-0.5cm}
\end{table}

\subsection{Finding 1: Scaling Homogeneous MAS Exhibits Diminishing Returns}
We first examine whether increasing the number of agents improves performance in homogeneous settings.
Figure~\ref{fig:homogeneous_scaling} shows success rates and marginal gains for both voting- and debate-based MAS across multiple tasks and base models.

Across all settings, we observe a consistent pattern:
accuracy improves only at small agent counts, after which marginal gains $\Delta\text{Success}/\Delta N$ rapidly collapse toward zero.
In several cases, performance even degrades as $N$ increases.

As predicted by our theoretical framework (Theorem~\ref{thm:lb_saturated}), this saturation occurs because homogeneous agents produce highly correlated outputs, so additional calls fail to increase the effective channel count $K$. In other words, allocating more test-time computation via homogeneous scaling does not reliably inject new usable evidence into the system.

\subsection{Finding 2: Diversity Consistently Beats Scale}

We compare homogeneous scaling with heterogeneous designs under matched compute in Table~\ref{tab:hetero_with_baseline}, which reports the performance of Vote and Debate mechanisms across all tasks and agent counts. In nearly all cases, heterogeneous configurations significantly outperform homogeneous ones, with gains increasing as $N$ grows. Figure~\ref{fig:Heatmap} provides a detailed view of this effect. Enriching diversity from L1 to L4 yields consistent performance improvements for both Vote and Debate. Notably, model diversity (L3) and persona diversity (L2) each deliver non-trivial gains, while their combination (L4) consistently performs best.

Table~\ref{tab:efficiency} shows the minimum number of heterogeneous agents required to outperform homogeneous configurations. For both Vote and Debate, L4 (full diversity) with just \textbf{2 agents} surpasses the performance of L1 (no diversity) with \textbf{16 agents}. This represents an $8\times$ reduction in agent count for equivalent or better accuracy. This result directly reflects the theory: by Corollary~\ref{cor:heterog_homog_gap_B}, the heterogeneous design achieves a higher $\alpha K$ product, so fewer agents suffice to reach the same information-recovery level.

We also compare heterogeneous model mixtures against independent single-model runs. Figure~\ref{fig:mixture_vs_independent} demonstrates that a mixture of three LLMs outperforms the average performance of the individual models, confirming that the improvements stem from complementary effective channels rather than simple averaging.

\begin{figure}[t]
    \centering
    \includegraphics[width=0.97\linewidth]{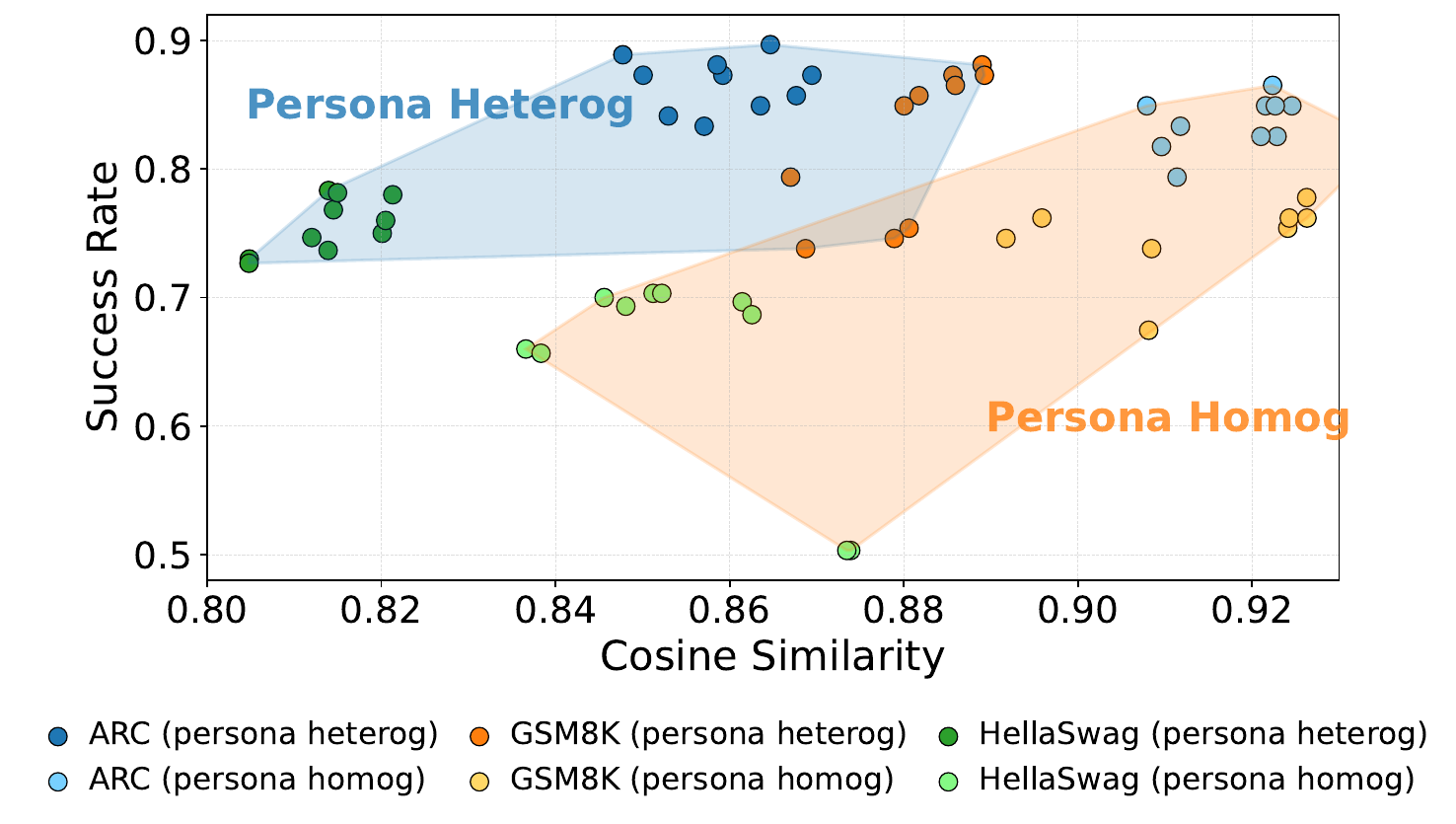}
    \vspace{-0.3cm}
    \caption{Correlation between cosine similarity and success rate. Homogeneous settings show higher similarity but lower performance; heterogeneous personas preserve diversity and improve accuracy.}
    \label{fig:Correlation}
    \vspace{-0.3cm}
\end{figure}

\begin{figure}[t]
    \centering
    \includegraphics[width=0.9\linewidth]{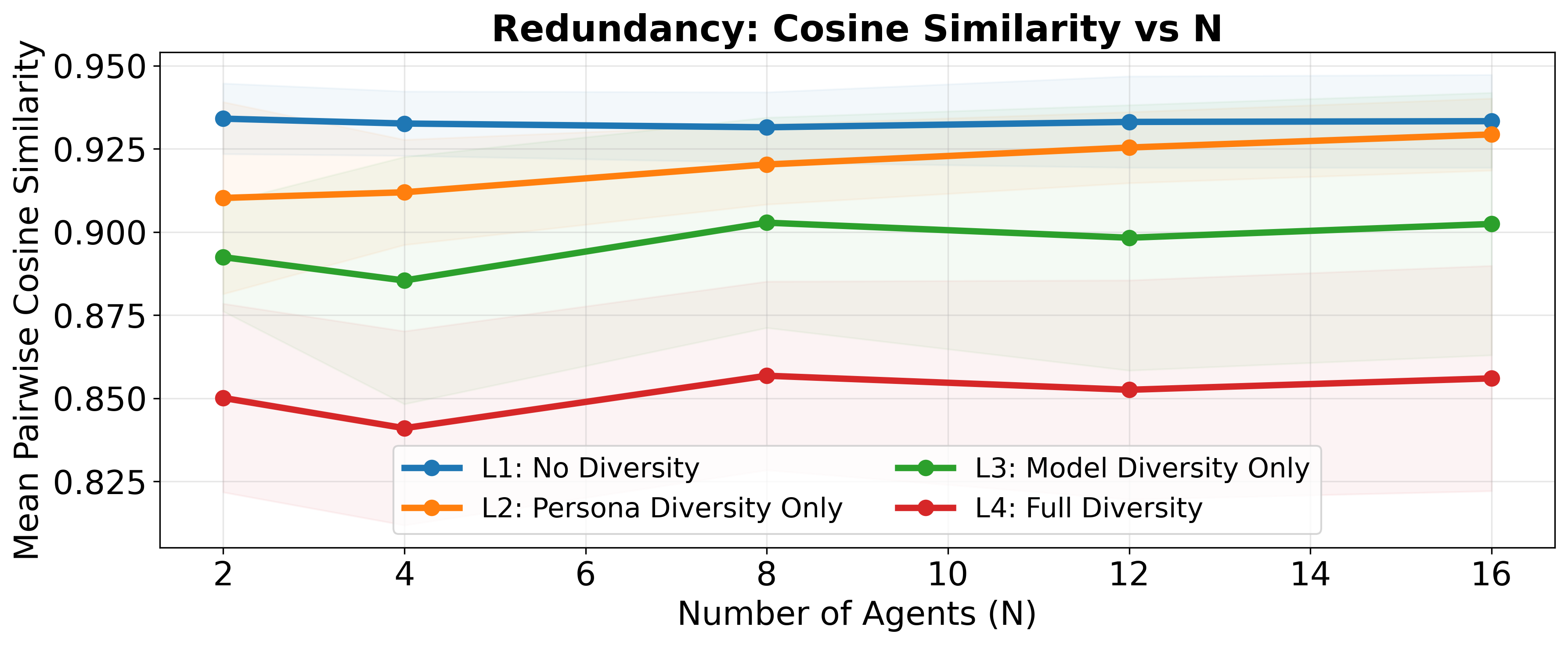}
    \vspace{-0.3cm}
    \caption{Mean pairwise cosine similarity vs. agent count. Higher diversity (L1$\to$L4) consistently reduce redundancy.}
    \label{fig:response_redundancy}
    \vspace{-0.3cm}
\end{figure}

\subsection{Finding 3: Performance Gains Are Governed by the Number of Effective Channels}

Our theory predicts that homogeneous agents produce highly correlated outputs, contributing few effective channels and leading to saturation. We now verify this empirically, proceeding from a simple redundancy proxy (pairwise cosine similarity) to the effective channel measure ($K^*$).

\subsubsection{High output similarity hinders performance}
\label{sec:redundancy_saturation}
A key reason homogeneous scaling saturates is that additional agent calls increasingly produce \emph{correlated} outputs, yielding limited \emph{new} evidence.
To quantify this redundancy, we embed each agent output (the full reasoning trace) using NV-Embed-v2~\citep{lee2025nvembedimprovedtechniquestraining} and compute the \emph{mean pairwise cosine similarity}: for $n$ agent outputs with normalized embeddings $\hat{\mathbf{z}}_1,\ldots,\hat{\mathbf{z}}_n$, this is $\bar{\rho} = \frac{2}{n(n-1)}\sum_{i<j}\langle \hat{\mathbf{z}}_i, \hat{\mathbf{z}}_j\rangle$.
While $\bar{\rho}$ is not an information-theoretic quantity, it provides a consistent proxy for output overlap: higher $\bar{\rho}$ indicates that agents explore fewer non-redundant directions, which constrains the growth of effective channels.

Figure~\ref{fig:Correlation} shows that homogeneous persona settings produce higher similarity yet do not translate this additional compute into higher success rates, whereas heterogeneous personas maintain lower similarity and achieve stronger performance.
Moreover, Figure~\ref{fig:response_redundancy} reveals a systematic scaling trend: for every diversity layer, redundancy increases with agent count $N$, implying that larger homogeneous ensembles mainly amplify existing trajectories rather than introducing qualitatively new evidence.
Crucially, redundancy decreases monotonically from L1 to L4, consistent with our hypothesis that heterogeneity mitigates output correlation and thus enlarges the number of effective channels.

While these results confirm a qualitative relationship between output diversity and performance, pairwise cosine similarity is a coarse measure. To obtain a more precise and theoretically grounded characterization, we next turn to the effective channel count $K^*$ introduced in Section~\ref{sec:Kstar}.

\begin{table}[t]
\centering
\caption{Relation between K* and Accuracy on ARC.}
\vspace{-0.2cm}
\label{tab:arc-Kstar}
\scriptsize
\renewcommand{\arraystretch}{1.2}
\setlength{\tabcolsep}{4.5pt}
\begin{tabular}{ll|cc|cc|cc}
\toprule
\multirow{2}{*}{Method} & \multirow{2}{*}{Config}
& \multicolumn{2}{c|}{\textbf{Performance}}
& \multicolumn{2}{c|}{\textbf{Channels}}
& \multicolumn{2}{c}{\textbf{Answer-Cond.}} \\
\cmidrule(lr){3-4}\cmidrule(lr){5-6}\cmidrule(lr){7-8}
& & Acc. & $\Delta$Acc
& $K^*$ & $\Delta K^*$
& $K^*_c$ & $K^*_w$ \\
\midrule
\multirow{4}{*}{Debate}
& L1 & 81.6\% & --   & 1.197 & --     & 1.184 & 1.177\\
& L2 & 81.0\% & -0.7 & 1.348 & +0.152 & 1.315 & 1.234 \\
& L3 & 83.3\% & +1.7 & 1.246 & +0.049 & 1.220 & 1.160 \\
& L4 & \textbf{85.9\%} & +4.2 & \textbf{1.517} & \textbf{+0.320} & \textbf{1.472} & 1.288  \\
\midrule
\multirow{4}{*}{Vote}
& L1 & 81.3\% & --   & 1.201 & --     & 1.183 & 1.173\\
& L2 & 81.5\% & +0.2 & 1.349 & +0.149 & 1.318 & 1.222 \\
& L3 & 83.8\% & +2.5 & 1.245 & +0.044 & 1.223 & 1.161 \\
& L4 & \textbf{87.5\%} & \textbf{+6.1} & \textbf{1.521} & \textbf{+0.321} & \textbf{1.484} & 1.297 \\
\bottomrule
\end{tabular}
\vspace{-0.5cm}
\end{table}

\subsubsection{Diverse Channels improves performance}
\label{sec:kstar_analysis}

We compute \(K^*\) by embedding each agent output with NV-Embed-v2~\citep{lee2025nvembedimprovedtechniquestraining}, forming the cosine-similarity matrix \(G\), trace-normalizing it to \(\rho\) with \(\mathrm{Tr}(\rho) = 1\), and defining \(K^*\) as the entropy effective rank of \(\rho\) (Eq.~\ref{eq:Kstar_def}).

\vspace{-0.2cm}
\paragraph{Diversity increases \(K^*\).}
As shown in Table~\ref{tab:arc-Kstar}, $K^*$ consistently increases with diversity level from L1 to L4 under both Vote and Debate mechanisms, validating $K^*$ as a robust indicator of system diversity without ground-truth labels.

\paragraph{Higher \(K^*\) leads to better performance.}
The increase in $K^*$ is accompanied by higher accuracy in most cases (Table~\ref{tab:arc-Kstar}). Figure~\ref{fig:L4_kstar_accuracy_mean_fit} further confirms this positive correlation, depicting a strong linear relationship between $K^*$ and task accuracy across configurations.
Moreover, consistent with Theorem~\ref{thm:lb_saturated}, the marginal improvement in accuracy diminishes as $K^*$ grows, reflecting the geometric decay $(1-\alpha)^K$ predicted by our theory. We observe a minor anomaly in L2 under Debate, where $K^*$ increases but accuracy slightly decreases; we investigate this through the decomposition of $K^*$ below.

\paragraph{Mechanistic Decomposition: \(K^*_c\) vs. \(K^*_w\).} 
To determine if the growth in \(K^*\) represents useful evidence or merely increased noise, we decompose it into \(K^*_c\) (correct reasoning diversity) and \(K^*_w\) (incorrect reasoning diversity). Let \(\hat{y}_i\) represent the final answer of agent \(i\), and \(Y\) be the ground-truth label. We define:
\vspace{-0.2cm}
\[
\mathcal{I}_c = \{i: \hat{y}_i = Y\}, \qquad \mathcal{I}_w = \{i: \hat{y}_i \neq Y\}
\]
Here, \(\mathcal{I}_c\) is the set of correct agents, and \(\mathcal{I}_w\) is the set of incorrect agents. We then compute the effective number of channels for each set:\vspace{-0.2cm}
\[
K^*_c = K^*(Z_c), \qquad K^*_w = K^*(Z_w)
\]
where \(Z_c\) and \(Z_w\) are the sub-matrices of the original data matrix \(Z\), corresponding to correct and incorrect agents.

\begin{figure}[t]
    \centering
    \includegraphics[width=0.97\linewidth]{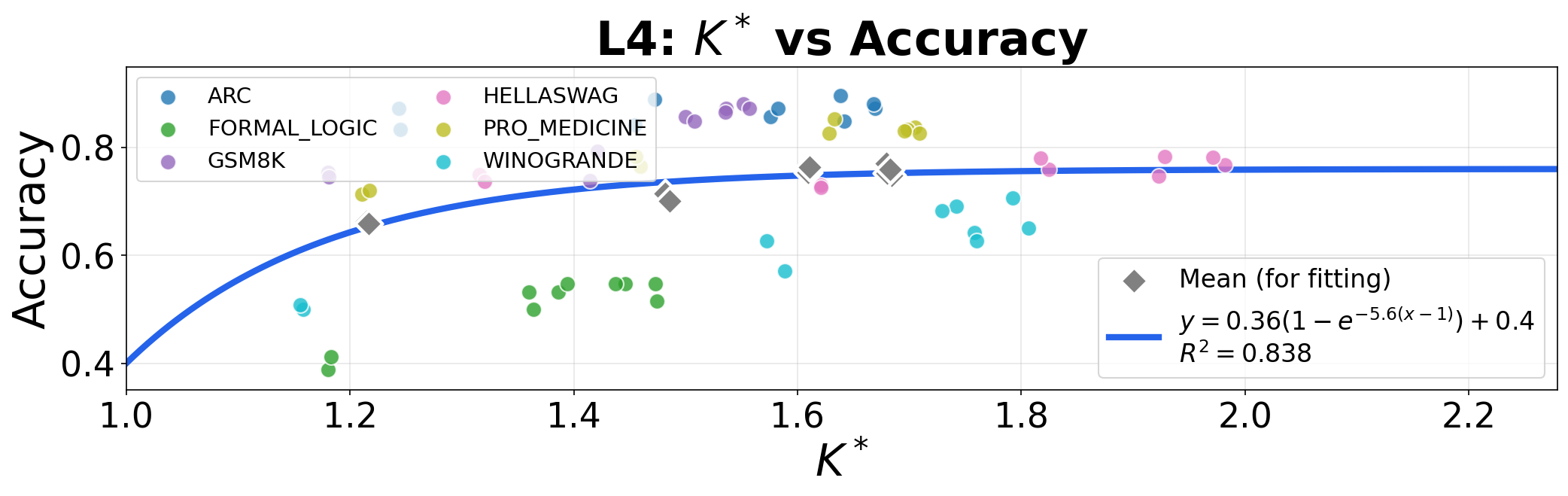}
    \vspace{-0.3cm}
    \caption{Correlation between $K^*$ and accuracy.}
    \label{fig:L4_kstar_accuracy_mean_fit}
\end{figure}

\begin{figure}[t]
    \centering
    \includegraphics[width=\linewidth]{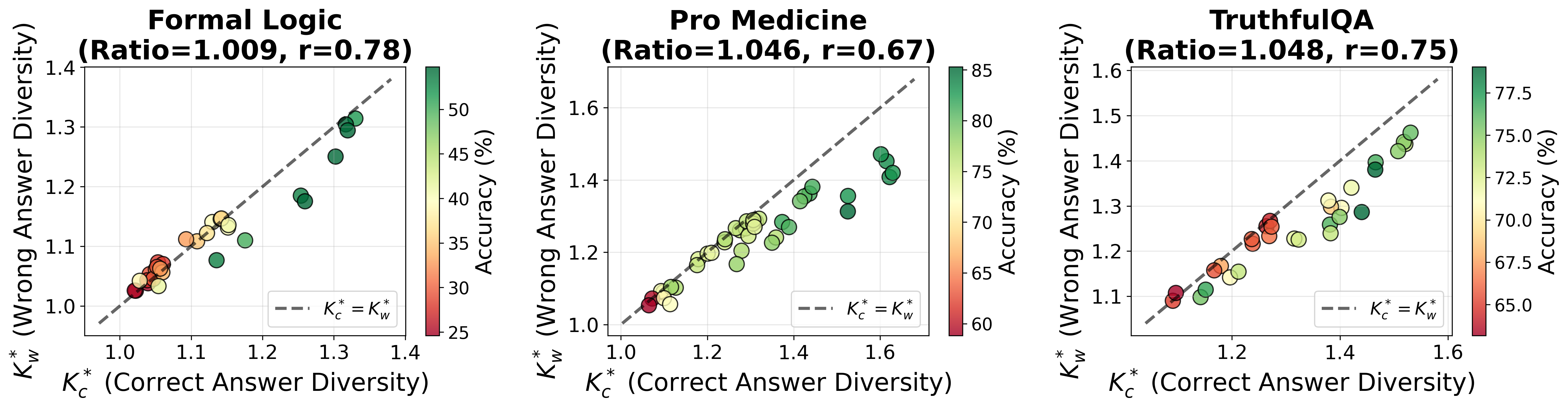}
    \vspace{-0.5cm}
    \caption{Decomposition of \(K^*\) on three tasks. Points below the diagonal correspond to configurations where correct answer diversity dominates.}
    \label{fig:exp1-task-profiles}
    \vspace{-0.2cm}
\end{figure}

\paragraph{The Empirical Boundary.}
Figure~\ref{fig:exp1-task-profiles} suggests an empirical boundary in the $(K^*_c, K^*_w)$ plane:
high-accuracy configurations concentrate in the region where $K^*_c > K^*_w$ (below the diagonal line).
The intuition is as follows: when multiple agents arrive at the correct answer through genuinely \emph{different} reasoning paths ($K^*_c$ is high), the correct answer receives support from independent evidence sources, making it more robust under aggregation. Conversely, when incorrect answers are also diverse ($K^*_w$ is high), the error ``votes'' are spread across many competing alternatives, which can dilute the correct signal.
Thus, $K^*_c > K^*_w$ indicates that correct reasoning benefits from diverse support while incorrect reasoning remains fragmented, and this favorable signal-to-noise ratio is a prerequisite for robust MAS performance.

\subsection{Design Guidelines for LLM-based MAS}
\label{sec:design_guidelines}
Our analysis of effective channels yields several data-driven design guidelines for MAS development:
\vspace{-0.2cm}
\begin{itemize}[leftmargin=2em] 
\setlength{\itemsep}{-0.2em}
    \item \textbf{Match diversity to task type.}  
    $K^*$ predicts accuracy strongly on reasoning tasks but weakly on knowledge-heavy tasks. For tasks requiring complex multi-step reasoning (e.g., \textit{GSM8K}, \textit{ARC}), investing in diversity yields significant performance gains. In contrast, for tasks dominated by factual retrieval (e.g., \textit{Winogrande}), the diversity investment should be more conservative.
    \item \textbf{Ensure correct-path dominance.}
    Systems with high \( K^*_c / K^*_w \) achieve substantially higher accuracy. In practice, this means that when introducing diversity, one should focus on increasing the diversity of \emph{correct} reasoning paths, for example by using personas that encourage different valid problem-solving strategies (e.g., algebraic vs.\ geometric approaches in math tasks), rather than indiscriminately adding diversity that may also amplify incorrect reasoning (e.g., random temperature increases that introduce more errors).
    \item \textbf{Right-size agent count.}  
    Homogeneous systems plateau at \(N \approx 4\), while heterogeneous systems continue to benefit from scaling up to \(N \approx 8\). Beyond this point, adding more agents results in diminishing returns and wasted compute resources. Thus, it is important to find a balance in agent count to avoid inefficiency.
\end{itemize}

\section{Conclusion}

This paper shows that simply increasing agent count in multi-agent systems results in diminishing returns, both for homogeneous and heterogeneous configurations. However, heterogeneity improves performance by introducing more diverse, non-redundant information, delaying saturation. We introduce \(K^*\), a label-free measure of effective channels, which reveals that performance gains are driven by the balance between correct-path diversity and redundancy. These results suggest that the challenge in multi-agent scaling lies in the effective allocation of diverse information channels rather than just raw computational power.

\newpage
\section*{Impact Statement}
This work establishes an information-theoretic framework for understanding scaling behavior in LLM-based multi-agent systems. We discuss the scope, limitations, and implications of our contributions below.
\vspace{-0.2cm}
\paragraph{Theoretical Contributions and Scope.}
Our framework provides architecture-agnostic upper bounds and lower bounds showing that MAS performance is fundamentally limited by diverisity. The geometric contraction result (Theorem~\ref{thm:lb_saturated}) offers a principled explanation for the empirically observed ``fast-then-slow'' pattern. However, our theoretical analysis relies on idealized assumptions: the evidence-bits model (Assumption~\ref{assump:evidence-bits}) assumes perfect sufficiency and conditional independence of latent evidence, and the coverage model (Assumption~\ref{assump:coverage}) assumes uniform and independent coverage probabilities. Real-world agents may exhibit more complex dependency structures.

\vspace{-0.2cm}
\paragraph{Limitations of $K^*$.}
While $K^*$ provides a practical label-free proxy for effective channels, it measures \emph{semantic} diversity in embedding space rather than \emph{task-relevant} information diversity. As shown in Section~\ref{sec:kstar_analysis}, the decomposition into $K^*_c$ and $K^*_w$ reveals that not all diversity is beneficial, only diversity among correct reasoning paths reliably improves performance. Furthermore, $K^*$ depends on the choice of embedding model, and its correlation with accuracy varies across task types (stronger for reasoning tasks, weaker for knowledge-intensive tasks). Developing task-adaptive diversity metrics remains an open problem.

\vspace{-0.2cm}
\paragraph{Empirical Scope.}
Our experiments focus on 7B-8B scale open-weight models across seven benchmarks. Whether the diversity-over-scale principle extends to larger models, closed-source APIs, or more complex agentic workflows (e.g., tool use, long-horizon planning) requires further investigation. Additionally, our analysis considers vote and debate mechanisms; other coordination protocols may exhibit different scaling behaviors.

\bibliography{main}

@article{ton2024understanding,
  title={Understanding chain-of-thought in llms through information theory},
  author={Ton, Jean-Francois and Taufiq, Muhammad Faaiz and Liu, Yang},
  journal={arXiv preprint arXiv:2411.11984},
  year={2024}
}

@article{gan2025rethinking,
  title={Rethinking external slow-thinking: From snowball errors to probability of correct reasoning},
  author={Gan, Zeyu and Liao, Yun and Liu, Yong},
  journal={arXiv preprint arXiv:2501.15602},
  year={2025}
}

@article{riedl2025emergent,
  title={Emergent coordination in multi-agent language models},
  author={Riedl, Christoph},
  journal={arXiv preprint arXiv:2510.05174},
  year={2025}
}

@book{chang2025multi,
  title={Multi-LLM Agent Collaborative Intelligence: The Path to Artificial General Intelligence},
  author={Chang, Edward Y},
  year={2025},
  publisher={Edward Y. Chang}
}

@article{choi2025debate,
  title={Debate or Vote: Which Yields Better Decisions in Multi-Agent Large Language Models?},
  author={Choi, Hyeong Kyu and Zhu, Xiaojin and Li, Sharon},
  journal={arXiv preprint arXiv:2508.17536},
  year={2025}
}

@article{wu2025hidden,
  title={The hidden strength of disagreement: Unraveling the consensus-diversity tradeoff in adaptive multi-agent systems},
  author={Wu, Zengqing and Ito, Takayuki},
  journal={arXiv preprint arXiv:2502.16565},
  year={2025}
}

@article{zhang2024diversity,
  title={Diversity empowers intelligence: Integrating expertise of software engineering agents},
  author={Zhang, Kexun and Yao, Weiran and Liu, Zuxin and Feng, Yihao and Liu, Zhiwei and Murthy, Rithesh and Lan, Tian and Li, Lei and Lou, Renze and Xu, Jiacheng and others},
  journal={arXiv preprint arXiv:2408.07060},
  year={2024}
}

@article{wang2024mixture,
  title={Mixture-of-agents enhances large language model capabilities},
  author={Wang, Junlin and Wang, Jue and Athiwaratkun, Ben and Zhang, Ce and Zou, James},
  journal={arXiv preprint arXiv:2406.04692},
  year={2024}
}

@article{tang2025importance,
  title={On the Importance of Task Complexity in Evaluating LLM-Based Multi-Agent Systems},
  author={Tang, Bohan and Liang, Huidong and Jiang, Keyue and Dong, Xiaowen},
  journal={arXiv preprint arXiv:2510.04311},
  year={2025}
}

@article{yuen2025intrinsic,
  title={Intrinsic Memory Agents: Heterogeneous Multi-Agent LLM Systems through Structured Contextual Memory},
  author={Yuen, Sizhe and Medina, Francisco Gomez and Su, Ting and Du, Yali and Sobey, Adam J},
  journal={arXiv preprint arXiv:2508.08997},
  year={2025}
}

@article{samuel2024personagym,
  title={Personagym: Evaluating persona agents and llms},
  author={Samuel, Vinay and Zou, Henry Peng and Zhou, Yue and Chaudhari, Shreyas and Kalyan, Ashwin and Rajpurohit, Tanmay and Deshpande, Ameet and Narasimhan, Karthik and Murahari, Vishvak},
  journal={arXiv preprint arXiv:2407.18416},
  year={2024}
}

@article{taillandier2025integrating,
  title={Integrating llm in agent-based social simulation: Opportunities and challenges},
  author={Taillandier, Patrick and Zucker, Jean Daniel and Grignard, Arnaud and Gaudou, Benoit and Huynh, Nghi Quang and Drogoul, Alexis},
  journal={arXiv preprint arXiv:2507.19364},
  year={2025}
}

@article{kim2025sciencescalingagentsystems,
      title={Towards a Science of Scaling Agent Systems}, 
      author={Yubin Kim and Ken Gu and Chanwoo Park and Chunjong Park and Samuel Schmidgall and A. Ali Heydari and Yao Yan and Zhihan Zhang and Yuchen Zhuang and Mark Malhotra and Paul Pu Liang and Hae Won Park and Yuzhe Yang and Xuhai Xu and Yilun Du and Shwetak Patel and Tim Althoff and Daniel McDuff and Xin Liu},
      year={2025},
      journal={arXiv preprint arXiv:2512.08296}, 
}

@article{allenai:arc,
      author    = {Peter Clark  and Isaac Cowhey and Oren Etzioni and Tushar Khot and
                    Ashish Sabharwal and Carissa Schoenick and Oyvind Tafjord},
      title     = {Think you have Solved Question Answering? Try ARC, the AI2 Reasoning Challenge},
      journal   = {arXiv:1803.05457v1},
      year      = {2018},
}

@article{cobbe2021gsm8k,
  title={Training Verifiers to Solve Math Word Problems},
  author={Cobbe, Karl and Kosaraju, Vineet and Bavarian, Mohammad and Chen, Mark and Jun, Heewoo and Kaiser, Lukasz and Plappert, Matthias and Tworek, Jerry and Hilton, Jacob and Nakano, Reiichiro and Hesse, Christopher and Schulman, John},
  journal={arXiv preprint arXiv:2110.14168},
  year={2021}
}

@InProceedings{ai2:winogrande,
title = {WinoGrande: An Adversarial Winograd Schema Challenge at Scale},
authors={Keisuke, Sakaguchi and Ronan, Le Bras and Chandra, Bhagavatula and Yejin, Choi
},
year={2019}
}

@article{hendryckstest2021,
  title={Measuring Massive Multitask Language Understanding},
  author={Dan Hendrycks and Collin Burns and Steven Basart and Andy Zou and Mantas Mazeika and Dawn Song and Jacob Steinhardt},
  journal={Proceedings of the International Conference on Learning Representations (ICLR)},
  year={2021}
}

@article{hendrycks2021ethics,
  title={Aligning AI With Shared Human Values},
  author={Dan Hendrycks and Collin Burns and Steven Basart and Andrew Critch and Jerry Li and Dawn Song and Jacob Steinhardt},
  journal={Proceedings of the International Conference on Learning Representations (ICLR)},
  year={2021}
}

@misc{lin2022truthfulqameasuringmodelsmimic,
      title={TruthfulQA: Measuring How Models Mimic Human Falsehoods}, 
      author={Stephanie Lin and Jacob Hilton and Owain Evans},
      year={2022},
      eprint={2109.07958},
      archivePrefix={arXiv},
      primaryClass={cs.CL},
}

@inproceedings{zellers2019hellaswag,
    title={HellaSwag: Can a Machine Really Finish Your Sentence?},
    author={Zellers, Rowan and Holtzman, Ari and Bisk, Yonatan and Farhadi, Ali and Choi, Yejin},
    booktitle ={Proceedings of the 57th Annual Meeting of the Association for Computational Linguistics},
    year={2019}
}

@article{qwen2.5,
    title = {Qwen2.5: A Party of Foundation Models},
    author = {{Qwen Team}},
    journal = {arXiv preprint arXiv:2412.15115},
    year = {2024}
}

@misc{jiang2023mistral7b,
      title={Mistral 7B}, 
      author={Albert Q. Jiang and Alexandre Sablayrolles and Arthur Mensch and Chris Bamford and Devendra Singh Chaplot and Diego de las Casas and Florian Bressand and Gianna Lengyel and Guillaume Lample and Lucile Saulnier and Lélio Renard Lavaud and Marie-Anne Lachaux and Pierre Stock and Teven Le Scao and Thibaut Lavril and Thomas Wang and Timothée Lacroix and William El Sayed},
      year={2023},
      eprint={2310.06825},
      archivePrefix={arXiv},
      primaryClass={cs.CL},
}

@article{grattafiori2024llama3herdmodels,
      title={The Llama 3 Herd of Models},
      author={Grattafiori, Aaron and others},
      year={2024},
      journal={arXiv preprint arXiv:2407.21783},
}

@article{chen2024llm,
      title={Are More LLM Calls All You Need? Towards Scaling Laws of Compound Inference Systems}, 
      author={Lingjiao Chen and Jared Quincy Davis and Boris Hanin and Peter Bailis and Ion Stoica and Matei Zaharia and James Zou},
      year={2024},
      eprint={2403.02419},
      journal={arXiv preprint arXiv:2403.02419},
}

@article{wang2024a,
  title = {A Survey on Large Language Model Based Autonomous Agents},
  author = {Wang, Lei and Ma, Chen and Feng, Xueyang and Zhang, Zeyu and Yang, Hao and Zhang, Jingsen and Chen, Zhiyuan and Tang, Jiakai and Chen, Xu and Lin, Yankai and Zhao, Wayne Xin and Wei, Zhewei and Wen, Jirong},
  year = {2024},
  journal = {Frontiers of Computer Science},
  volume = {18},
  number = {6},
  pages = {186345},
  doi = {10.1007/s11704-024-40231-1},
}

@article{xi2023risepotentiallargelanguage,
      title={The Rise and Potential of Large Language Model Based Agents: A Survey}, 
      author={Zhiheng Xi and Wenxiang Chen and Xin Guo and Wei He and Yiwen Ding and Boyang Hong and Ming Zhang and Junzhe Wang and Senjie Jin and Enyu Zhou and Rui Zheng and Xiaoran Fan and Xiao Wang and Limao Xiong and Yuhao Zhou and Weiran Wang and Changhao Jiang and Yicheng Zou and Xiangyang Liu and Zhangyue Yin and Shihan Dou and Rongxiang Weng and Wensen Cheng and Qi Zhang and Wenjuan Qin and Yongyan Zheng and Xipeng Qiu and Xuanjing Huang and Tao Gui},
      year={2023},
      eprint={2309.07864},
      journal={arXiv preprint arXiv:2309.07864},
}

@article{du2023improvingfactualityreasoninglanguage,
      title={Improving Factuality and Reasoning in Language Models through Multiagent Debate}, 
      author={Yilun Du and Shuang Li and Antonio Torralba and Joshua B. Tenenbaum and Igor Mordatch},
      year={2023},
      eprint={2305.14325},
      journal={arXiv preprint arXiv:2305.14325},
}

@article{hong2024metagptmetaprogrammingmultiagent,
      title={MetaGPT: Meta Programming for A Multi-Agent Collaborative Framework}, 
      author={Sirui Hong and Mingchen Zhuge and Jiaqi Chen and Xiawu Zheng and Yuheng Cheng and Ceyao Zhang and Jinlin Wang and Zili Wang and Steven Ka Shing Yau and Zijuan Lin and Liyang Zhou and Chenyu Ran and Lingfeng Xiao and Chenglin Wu and Jürgen Schmidhuber},
      year={2024},
      eprint={2308.00352},
      journal={arXiv preprint arXiv:2308.00352},
}

@inproceedings{
dang2025multiagent,
title={Multi-Agent Collaboration via Evolving Orchestration},
author={Yufan Dang and Chen Qian and Xueheng Luo and Jingru Fan and Zihao Xie and Ruijie Shi and Weize Chen and Cheng Yang and Xiaoyin Che and Ye Tian and Xuantang Xiong and Lei Han and Zhiyuan Liu and Maosong Sun},
booktitle={The Thirty-ninth Annual Conference on Neural Information Processing Systems},
year={2025},
}

@article{li2024agentsneed,
      title={More Agents Is All You Need}, 
      author={Junyou Li and Qin Zhang and Yangbin Yu and Qiang Fu and Deheng Ye},
      year={2024},
      eprint={2402.05120},
      journal={arXiv preprint arXiv:2402.05120},
}

@inproceedings{
qian2025scaling,
title={Scaling Large Language Model-based Multi-Agent Collaboration},
author={Chen Qian and Zihao Xie and YiFei Wang and Wei Liu and Kunlun Zhu and Hanchen Xia and Yufan Dang and Zhuoyun Du and Weize Chen and Cheng Yang and Zhiyuan Liu and Maosong Sun},
booktitle={The Thirteenth International Conference on Learning Representations},
year={2025},
}

@article{guo2024large,
      title={Large Language Model based Multi-Agents: A Survey of Progress and Challenges}, 
      author={Taicheng Guo and Xiuying Chen and Yaqi Wang and Ruidi Chang and Shichao Pei and Nitesh V. Chawla and Olaf Wiest and Xiangliang Zhang},
      year={2024},
      eprint={2402.01680},
      journal={arXiv preprint arXiv:2402.01680}, 
}

@inproceedings{
cemri2025why,
title={Why Do Multi-Agent {LLM} Systems Fail?},
author={Mert Cemri and Melissa Z Pan and Shuyi Yang and Lakshya A Agrawal and Bhavya Chopra and Rishabh Tiwari and Kurt Keutzer and Aditya Parameswaran and Dan Klein and Kannan Ramchandran and Matei Zaharia and Joseph E. Gonzalez and Ion Stoica},
booktitle={The Thirty-ninth Annual Conference on Neural Information Processing Systems Datasets and Benchmarks Track},
year={2025},
}

@inproceedings{
wu2024autogen,
title={AutoGen: Enabling Next-Gen {LLM} Applications via Multi-Agent Conversations},
author={Qingyun Wu and Gagan Bansal and Jieyu Zhang and Yiran Wu and Beibin Li and Erkang Zhu and Li Jiang and Xiaoyun Zhang and Shaokun Zhang and Jiale Liu and Ahmed Hassan Awadallah and Ryen W White and Doug Burger and Chi Wang},
booktitle={First Conference on Language Modeling},
year={2024},
}

@inproceedings{li-etal-2024-improving-multi,
    title = "Improving Multi-Agent Debate with Sparse Communication Topology",
    author = "Li, Yunxuan  and
      Du, Yibing  and
      Zhang, Jiageng  and
      Hou, Le  and
      Grabowski, Peter  and
      Li, Yeqing  and
      Ie, Eugene",
    editor = "Al-Onaizan, Yaser  and
      Bansal, Mohit  and
      Chen, Yun-Nung",
    booktitle = "Findings of the Association for Computational Linguistics: EMNLP 2024",
    month = nov,
    year = "2024",
    address = "Miami, Florida, USA",
    publisher = "Association for Computational Linguistics",
    doi = "10.18653/v1/2024.findings-emnlp.427",
    pages = "7281--7294",
}

@inproceedings{
zhao2025sirius,
title={SiriuS: Self-improving Multi-agent Systems via Bootstrapped Reasoning},
author={Wanjia Zhao and Mert Yuksekgonul and Shirley Wu and James Zou},
booktitle={The Thirty-ninth Annual Conference on Neural Information Processing Systems},
year={2025},
}

@inproceedings{camel,
  author = {Li, Guohao and Al Kader Hammoud, Hasan Abed and Itani, Hani and Khizbullin, Dmitrii and Ghanem, Bernard},
  title = {CAMEL: Communicative Agents for ``Mind'' Exploration of Large Language Model Society},
  year = {2023},
  booktitle = {Proceedings of the 37th International Conference on Neural Information Processing Systems},
  series = {NeurIPS '23}
}

@inproceedings{10.5555/3692070.3693020,
  author = {Khan, Akbir and Hughes, John and Valentine, Dan and Ruis, Laura and Sachan, Kshitij and Radhakrishnan, Ansh and Grefenstette, Edward and Bowman, Samuel R. and Rockt\"{a}schel, Tim and Perez, Ethan},
  title = {Debating with More Persuasive {LLMs} Leads to More Truthful Answers},
  year = {2024},
  booktitle = {Proceedings of the 41st International Conference on Machine Learning},
  series = {ICML'24}
}

@article{wang2023selfconsistencyimproveschainthought,
      title={Self-Consistency Improves Chain of Thought Reasoning in Language Models}, 
      author={Xuezhi Wang and Jason Wei and Dale Schuurmans and Quoc Le and Ed Chi and Sharan Narang and Aakanksha Chowdhery and Denny Zhou},
      year={2023},
      eprint={2203.11171},
      journal={arXiv preprint arXiv:2203.11171}, 
}

@misc{lee2025nvembedimprovedtechniquestraining,
      title={NV-Embed: Improved Techniques for Training LLMs as Generalist Embedding Models}, 
      author={Chankyu Lee and Rajarshi Roy and Mengyao Xu and Jonathan Raiman and Mohammad Shoeybi and Bryan Catanzaro and Wei Ping},
      year={2025},
      eprint={2405.17428},
      archivePrefix={arXiv},
      primaryClass={cs.CL},
}

@article{wei2022chain,
  title={Chain-of-thought prompting elicits reasoning in large language models},
  author={Wei, Jason and Wang, Xuezhi and Schuurmans, Dale and Maeda, Maarten and Chi, Ed and Xia, Sharan and Le, Quoc and Zhou, Denny},
  journal={Advances in Neural Information Processing Systems},
  volume={35},
  pages={24824--24837},
  year={2022}
}

@article{achiam2023gpt,
  title={GPT-4 Technical Report},
  author={Achiam, Josh and Adler, Steven and Agarwal, Sandhini and Ahmad, Lama and Akkaya, Ilge and Aleman, Florencia Leoni and Almeida, Diogo and Altenschmidt, Janko and Altman, Sam and Anadkat, Shyamal and others},
  journal={arXiv preprint arXiv:2303.08774},
  year={2023}
}

@article{huang2023large,
  title={Large Language Models Cannot Self-Correct Reasoning Yet},
  author={Huang, Jie and Gu, Xinyun and Chen, Le and Han, Jiawei and Kraska, Tim},
  journal={arXiv preprint arXiv:2310.01798},
  year={2023}
}

@inproceedings{qian2024chatdev,
  title={ChatDev: Communicative Agents for Software Development},
  author={Qian, Chen and Liu, Wei and Liu, Hongzhang and Chen, Nuo and Dang, Yufan and Li, Jiahao and Yang, Cheng and Chen, Weize and Su, Yusheng and Cong, Xin and Xu, Juyuan and Li, Dahai and Liu, Zhiyuan and Sun, Maosong},
  booktitle={Proceedings of the 62nd Annual Meeting of the Association for Computational Linguistics (Volume 1: Long Papers)},
  pages={15174--15186},
  year={2024},
  publisher={Association for Computational Linguistics}
}
\bibliographystyle{icml2026}

\newpage
\appendix
\onecolumn

\section{Proofs and Technical Details}
\label{sec:appendix}

This appendix provides full proofs and technical details omitted from the main text.
Section~\ref{app:prelim} reviews standard information-theoretic identities.
Section~\ref{app:budget} proves Theorem~\ref{thm:finite_budget}.
Sections~\ref{app:parallel}--\ref{app:sequential} derive upper bounds for common MAS workflows.
Section~\ref{app:lowerbound} presents the evidence-bits coverage model and proves
Theorem~\ref{thm:lb_saturated} and Corollary~\ref{cor:lb} from the main text.
Section~\ref{app:kstar} proves basic properties of the effective channel count $K^*$.

\subsection{Information-Theoretic Preliminaries}
\label{app:prelim}

We recall standard definitions and lemmas from information theory.

\begin{definition}[Conditional Mutual Information]
For random variables $A,B,C$,
\begin{equation}
I(A;B\mid C)
= H(A\mid C)-H(A\mid B,C)
= H(B\mid C)-H(B\mid A,C).
\end{equation}
\end{definition}

\begin{lemma}[Chain Rule for Mutual Information]
\label{lem:chain-rule}
For random variables $X,Y_1,\ldots,Y_n,Z$,
\begin{equation}
I(Y_1,\ldots,Y_n;X\mid Z)
= \sum_{i=1}^n I(Y_i;X\mid Z,Y_{<i}).
\end{equation}
\end{lemma}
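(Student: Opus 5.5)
The statement just above is the chain rule for conditional mutual information (Lemma~\ref{lem:chain-rule}). I would prove it by induction on $n$, using only the definition $I(A;B\mid C)=H(A\mid C)-H(A\mid B,C)$ and the chain rule for conditional entropy.

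The first step is to expand the left-hand side using the second form of the definition:
\[
I(Y_1,\ldots,Y_n;X\mid Z)=H(Y_{1:n}\mid Z)-H(Y_{1:n}\mid X,Z).
\]

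The second step applies the entropy chain rule $H(Y_{1:n}\mid C)=\sum_{i=1}^n H(Y_i\mid C,Y_{<i})$ to each term, once with $C=Z$ and once with $C=(X,Z)$. The entropy chain rule itself follows from the definition $H(A,B\mid C)=H(A\mid C)+H(B\mid A,C)$, which comes from factorizing the conditional joint distribution and taking $-\EE\log$, together with induction on $n$.

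The third step subtracts the two expansions term by term. This gives
\[
\sum_{i=1}^n \bigl[H(Y_i\mid Z,Y_{<i})-H(Y_i\mid X,Z,Y_{<i})\bigr]=\sum_{i=1}^n I(Y_i;X\mid Z,Y_{<i}),
\]
where each summand is recognized through the definition of conditional mutual information, with conditioning variable $(Z,Y_{<i})$.

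There is essentially no obstacle. The only care needed is that all entropies are finite, so that the subtractions are legitimate. This holds for discrete variables with finite entropy. In the general case one works with the Kolmogorov (relative-entropy) definition of mutual information: the two-variable identity $I(A,B;X\mid Z)=I(A;X\mid Z)+I(B;X\mid A,Z)$ follows from the chain rule for KL divergence, and the same induction then goes through.
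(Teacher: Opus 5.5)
Your proof is correct: expanding via $I(A;B\mid C)=H(A\mid C)-H(A\mid B,C)$, applying the entropy chain rule under the two conditionings $Z$ and $(X,Z)$, and subtracting term by term is the standard textbook derivation. The paper gives no proof of this lemma and simply recalls it as a standard identity, so your argument supplies exactly the fact the paper assumes (finiteness of the entropies is implicit there because outputs and answers are discrete).
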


\begin{lemma}[Data Processing Inequality]
\label{lem:dpi}
If $A\to B\to C$ forms a Markov chain, then $I(A;C)\le I(A;B)$.
\end{lemma}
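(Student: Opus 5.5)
The plan is to expand the joint mutual information $I(A;B,C)$ in two different orders using the chain rule (Lemma~\ref{lem:chain-rule}) and compare the two expansions. Taking $Z$ trivial in Lemma~\ref{lem:chain-rule}, we get
\begin{equation*}
I(A;B,C) \;=\; I(A;C) + I(A;B\mid C) \;=\; I(A;B) + I(A;C\mid B).
\end{equation*}
This identity holds for any three random variables. The Markov hypothesis has not been used yet.

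Next we use the Markov structure. The statement $A\to B\to C$ means that $A$ and $C$ are conditionally independent given $B$: $p(a,c\mid b)=p(a\mid b)\,p(c\mid b)$ for every $b$ with $p(b)>0$. Hence
\begin{equation*}
I(A;C\mid B)=\mathbb{E}_{B}\bigl[D_{\mathrm{KL}}\bigl(p_{A,C\mid B}\,\|\,p_{A\mid B}\,p_{C\mid B}\bigr)\bigr]=0 .
\end{equation*}
Equivalently, $H(C\mid B)=H(C\mid A,B)$ in the entropy form of the conditional mutual information definition. Substituting this into the identity above gives $I(A;C)+I(A;B\mid C)=I(A;B)$.

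It remains to show $I(A;B\mid C)\ge 0$; the inequality $I(A;C)\le I(A;B)$ then follows immediately. This nonnegativity is the only nontrivial ingredient. I would obtain it by writing $I(A;B\mid C)$ as an expectation over $C$ of a KL divergence between $p_{A,B\mid C}$ and $p_{A\mid C}\,p_{B\mid C}$, and applying Gibbs' inequality (Jensen applied to $-\log$) to each term.

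The main obstacle is purely one of rigor: stating nonnegativity of KL divergence and the equivalence between the Markov property and vanishing conditional mutual information in the generality the paper needs. In the discrete setting this is routine. For general (e.g.\ continuous or text-valued) variables, I would phrase both facts through densities with respect to a common dominating measure, or simply cite them as standard.
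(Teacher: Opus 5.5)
Your proof is correct. It is the standard textbook argument: expand $I(A;B,C)$ two ways by the chain rule, use the Markov property to make $I(A;C\mid B)$ vanish, and conclude from $I(A;B\mid C)\ge 0$. Since every term is nonnegative, the last step is an addition rather than a subtraction, so it does not rely on $I(A;B)$ being finite.

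The paper gives no proof of its own. The lemma is simply recalled as a standard fact in Section~\ref{app:prelim}. However, your key identity $I(A;C)+I(A;B\mid C)=I(A;B)+I(A;C\mid B)$ is, after rearranging, the paper's Lemma~\ref{lem:three-way} with its conditioning variable taken trivial. The paper proves that lemma by the same double chain-rule expansion.

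One point worth adding: the paper actually invokes the conditional version in \eqref{eq:dpi}, namely $I(Z_{1:n};Y\mid X)\ge I(\tilde Z_{1:K};Y\mid X)$. Your argument carries over verbatim with everything conditioned on $X$. The vanishing term there is $I(Y;\tilde Z_{1:K}\mid X,Z_{1:n})=0$, which holds because $\tilde Z_{1:K}=\phi(Z_{1:n})$.
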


\begin{lemma}[Incremental Information as Entropy Difference]
\label{lem:delta-entropy}
Let $\Delta_i := I(Z_i;Y\mid X,Z_{<i})$. Then
\begin{equation}
\Delta_i
= H(Y\mid X,Z_{<i}) - H(Y\mid X,Z_{\le i}),
\end{equation}
where $Z_{\le i}=(Z_1,\ldots,Z_i)$.
\end{lemma}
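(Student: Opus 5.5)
The statement to prove is Lemma~\ref{lem:delta-entropy}, the identity $\Delta_i = H(Y\mid X,Z_{<i}) - H(Y\mid X,Z_{\le i})$. I would get it straight from the definition of conditional mutual information recalled above. I apply that definition with $A=Z_i$, $B=Y$ and conditioning variable $C=(X,Z_{<i})$, and I use the second of its two equivalent forms, the one that expands in terms of the entropy of $B$:
\begin{equation*}
I(Z_i;Y\mid X,Z_{<i}) = H(Y\mid X,Z_{<i}) - H(Y\mid Z_i, X, Z_{<i}).
\end{equation*}
The only remaining observation is notational. Conditioning on the triple $(Z_i,X,Z_{<i})$ is the same as conditioning on $(X,Z_{\le i})$, because $Z_{\le i}=(Z_{<i},Z_i)$ by definition and conditional entropy depends only on the $\sigma$-algebra generated by the conditioning variables, not on the order in which they are listed. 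Substituting this gives exactly the claimed identity.

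For completeness I would justify the symmetric form of the definition, in case it is not taken as given. Write both sides as $H(A\mid C)+H(B\mid C)-H(A,B\mid C)$ using the chain rule for entropy, $H(A,B\mid C)=H(B\mid C)+H(A\mid B,C)$. This needs the entropies to be finite. That holds automatically when $Y$ takes values in a finite answer set, and more generally whenever $H(Y\mid X)<\infty$, which is already implicit in Theorem~\ref{thm:finite_budget}. The same finiteness assumption is what makes the difference $H(Y\mid X,Z_{<i}) - H(Y\mid X,Z_{\le i})$ well defined rather than of the form $\infty-\infty$.

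As a consistency check, summing the identity over $i=1,\dots,n$ telescopes to $H(Y\mid X)-H(Y\mid X,Z_{1:n})=I_{\mathrm{MAS}}(n)$, which recovers \eqref{eq:delta_def}. There is no genuine obstacle. The one point that needs care is the finiteness condition above; I would state it explicitly, taking $Y$ discrete or assuming $H(Y\mid X)<\infty$ throughout.
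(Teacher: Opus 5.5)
Your proof is correct and is the same as the paper's. Both apply the definition of conditional mutual information in the form $I(A;B\mid C)=H(B\mid C)-H(B\mid A,C)$ with $A=Z_i$, $B=Y$, $C=(X,Z_{<i})$, and then rewrite the conditioning on $(X,Z_{<i},Z_i)$ as conditioning on $(X,Z_{\le i})$; your finiteness caveat and telescoping check are sound additions that the paper leaves implicit.
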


\begin{proof}
By the definition of conditional mutual information,
\begin{align}
\Delta_i
&= I(Z_i;Y\mid X,Z_{<i}) \\
&= H(Y\mid X,Z_{<i}) - H(Y\mid X,Z_{<i},Z_i) \\
&= H(Y\mid X,Z_{<i}) - H(Y\mid X,Z_{\le i}).\qedhere
\end{align}
\end{proof}

\subsection{Finite Information Budget (Upper Bound)}
\label{app:budget}

For completeness, the total information an MAS can extract is always upper-bounded by the intrinsic task uncertainty.

\begin{theorem}[Finite Information Budget]
\label{thm:finite_budget_app}
For any MAS transcript $Z_{1:n}$,
\begin{equation}
I(Z_{1:n};Y\mid X)\le H(Y\mid X).
\end{equation}
Moreover, writing $I(Z_{1:n};Y\mid X)=\sum_{i=1}^n \Delta_i$ with $\Delta_i:=I(Z_i;Y\mid X,Z_{<i})$, we have $\sum_{i=1}^n \Delta_i\le H(Y\mid X)$ and $\Delta_i\to 0$ as $i\to\infty$.
\end{theorem}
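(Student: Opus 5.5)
The bound itself comes straight from the definition of conditional mutual information recalled in Section~\ref{app:prelim}. I will write $I(Z_{1:n};Y\mid X)=H(Y\mid X)-H(Y\mid X,Z_{1:n})$ and use that conditional entropy is nonnegative for the discrete answer variable $Y$ (the answer space $\mathcal{Y}$ is finite or countable, so $H(Y\mid X)$ is a Shannon entropy). This gives $H(Y\mid X,Z_{1:n})\ge 0$ and hence the inequality. No assumption on the dependence structure of the $Z_i$ is needed, so the bound holds for every workflow.

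For the second part I will invoke the chain rule (Lemma~\ref{lem:chain-rule}) with conditioning variable $X$. This gives $I(Z_{1:n};Y\mid X)=\sum_{i=1}^n\Delta_i$, and combined with the first part it yields $\sum_{i=1}^n\Delta_i\le H(Y\mid X)$ for every $n$.

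To get $\Delta_i\to 0$, I consider the transcript as an infinite sequence $Z_1,Z_2,\ldots$, where the first $n$ terms are the transcript for call count $n$. Each $\Delta_i\ge 0$, since conditional mutual information is nonnegative; equivalently, by Lemma~\ref{lem:delta-entropy}, $\Delta_i$ is the drop in the nonincreasing sequence $H(Y\mid X,Z_{\le i})$, which conditioning cannot increase. The partial sums are therefore nondecreasing and bounded above by $H(Y\mid X)$. Provided $H(Y\mid X)<\infty$, the series $\sum_i\Delta_i$ converges, and the terms of a convergent series of nonnegative numbers tend to zero. Equivalently, the telescoping form shows that $H(Y\mid X,Z_{\le i})$ is monotone and bounded below by $0$, so it converges and its successive differences vanish.

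The main obstacle is interpretive rather than technical, and it concerns the meaning of ``$\Delta_i\to 0$ as $i\to\infty$.'' The statement only makes sense for a single consistent infinite process, meaning the transcripts for different $n$ must be nested prefixes of one another. It also requires finiteness of $H(Y\mid X)$, which I will state explicitly (it is automatic when $\mathcal{Y}$ is finite). Once these are fixed, the proof is a couple of lines.
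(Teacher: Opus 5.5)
Your proof is correct and follows the paper's argument essentially step for step. The paper uses nonnegativity of $H(Y\mid X,Z_{1:n})$ for the bound, the chain rule (Lemma~\ref{lem:chain-rule}) for the decomposition, and nonnegative $\Delta_i$ with partial sums uniformly bounded by $H(Y\mid X)$ to conclude $\Delta_i\to 0$. Your explicit caveats, that the transcripts must be nested prefixes of a single infinite sequence and that $H(Y\mid X)<\infty$, are assumptions the paper's proof relies on only implicitly, so stating them makes the argument more rigorous without changing it.
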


\begin{proof}
By the definition of conditional mutual information,
\begin{equation}
I(Z_{1:n};Y\mid X)
= H(Y\mid X) - H(Y\mid X,Z_{1:n})
\le H(Y\mid X),
\end{equation}
since conditional entropy is nonnegative.
By Lemma~\ref{lem:chain-rule},
\begin{equation}
I(Z_{1:n};Y\mid X)
= \sum_{i=1}^n I(Z_i;Y\mid X,Z_{<i})
= \sum_{i=1}^n \Delta_i.
\end{equation}
Because $\Delta_i\ge 0$ and the partial sums are uniformly bounded by $H(Y\mid X)$, we must have $\Delta_i\to 0$ as $i\to\infty$.
\end{proof}

\subsection{Parallel Voting: Assumptions and Upper Bounds}
\label{app:parallel}

This section derives the parallel-voting upper bounds used in the main text.
The key message is that repeated sampling from the same configuration produces redundant evidence.

\subsubsection{Conditional Independence for Parallel Sampling}
\label{app:parallel-ci}

\begin{assumption}[Conditional Independence for Parallel Sampling (All Types)]
\label{assump:parallel-ci}
Consider a parallel MAS with agent configuration types $b(i)\in\mathcal{B}$.
There exist channels $\{K_b(\cdot\mid x,y)\}_{b\in\mathcal{B}}$ such that, for every $i$,
\begin{equation}
P(Z_i=z\mid X=x,Y=y, Z_{<i}) \;=\; K_{b(i)}(z\mid x,y),
\end{equation}
and the outputs are mutually independent conditioned on $(X,Y)$:
\begin{equation}
P(Z_{1:n}\mid X,Y) \;=\; \prod_{i=1}^n P(Z_i\mid X,Y, b(i)).
\end{equation}
\end{assumption}

Define the single-call information for type $b$:
\begin{equation}
I_b := I(Z^{(b)};Y\mid X),
\label{eq:Ib_def}
\end{equation}
where $Z^{(b)}$ denotes one output from type $b$ in isolation.

\subsubsection{A Redundancy Identity}
\label{app:parallel-identity}

\begin{lemma}[Three-Way Mutual Information Decomposition]
\label{lem:three-way}
For any random variables $A,B,C,D$,
\begin{equation}
I(A;B\mid C,D)
= I(A;B\mid C) + I(A;D\mid B,C) - I(A;D\mid C).
\end{equation}
\end{lemma}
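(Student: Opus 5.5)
The plan is to expand both conditional mutual informations on the right-hand side, together with the one on the left, through the chain rule (Lemma~\ref{lem:chain-rule}). The aim is to reduce the identity to a cancellation of a common term. The natural pivot is the joint quantity $I(A;B,D\mid C)$, which can be split in two orders:
\begin{align}
I(A;B,D\mid C) &= I(A;B\mid C) + I(A;D\mid B,C), \\
I(A;B,D\mid C) &= I(A;D\mid C) + I(A;B\mid C,D).
\end{align}
Both lines are instances of Lemma~\ref{lem:chain-rule} with conditioning variable $C$ and the pair $(B,D)$ taken in the two possible orders.

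Equating the two right-hand sides and solving for $I(A;B\mid C,D)$ gives exactly
\[
I(A;B\mid C,D)=I(A;B\mid C)+I(A;D\mid B,C)-I(A;D\mid C).
\]
So the proof is a single application of the chain rule in two orders, followed by subtraction. No assumption on the joint distribution is needed beyond finiteness of the quantities involved.

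The only point needing care is that the rearrangement must not produce an $\infty-\infty$. For discrete variables with finite entropy, which is the setting of the paper since $H(Y\mid X)$ is treated as finite, every term is bounded by a finite entropy and the subtraction is legitimate. As a cross-check, I would also verify the identity directly by writing each term as a difference of conditional entropies using the definition of conditional mutual information. Under that expansion the right-hand side becomes
\[
[H(A|C)-H(A|B,C)]+[H(A|B,C)-H(A|B,C,D)]-[H(A|C)-H(A|C,D)],
\]
which telescopes to $H(A|C,D)-H(A|B,C,D)=I(A;B\mid C,D)$. This entropy expansion is the route I would actually write out, since it is fully mechanical.
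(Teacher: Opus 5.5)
Your proof is correct and follows the paper's argument exactly: expand $I(A;B,D\mid C)$ by the chain rule with $B$ first and with $D$ first, equate the two expansions, and rearrange. Your entropy-telescoping cross-check is also valid, and it is the same identity written out one level further in terms of conditional entropies.
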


\begin{proof}
Apply chain rule in two ways:
\begin{align}
I(A;B,D\mid C) &= I(A;B\mid C) + I(A;D\mid B,C), \\
I(A;B,D\mid C) &= I(A;D\mid C) + I(A;B\mid C,D).
\end{align}
Equating and rearranging yields the claim.
\end{proof}

\begin{corollary}[Incremental Gain under Parallel Sampling]
\label{cor:parallel-identity}
With $A=Z_i$, $B=Y$, $C=X$, $D=Z_{<i}$,
\begin{equation}
I(Z_i;Y\mid X,Z_{<i})
=
I(Z_i;Y\mid X)
+
I(Z_i;Z_{<i}\mid X,Y)
-
I(Z_i;Z_{<i}\mid X).
\label{eq:parallel-identity}
\end{equation}
Under Assumption~\ref{assump:parallel-ci}, $I(Z_i;Z_{<i}\mid X,Y)=0$, hence
\begin{equation}
I(Z_i;Y\mid X,Z_{<i})
= I(Z_i;Y\mid X) - I(Z_i;Z_{<i}\mid X)
\le I(Z_i;Y\mid X).
\label{eq:delta_le_marginal}
\end{equation}
\end{corollary}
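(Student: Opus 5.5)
The identity \eqref{eq:parallel-identity} is a direct instance of Lemma~\ref{lem:three-way}, so I would not redo the chain-rule argument. I would substitute $A=Z_i$, $B=Y$, $C=X$, $D=Z_{<i}$ and read off
\[
I(Z_i;Y\mid X,Z_{<i}) = I(Z_i;Y\mid X) + I(Z_i;Z_{<i}\mid Y,X) - I(Z_i;Z_{<i}\mid X).
\]
I would note that the conditioning sets $\{Y,X\}$ and $\{X,Y\}$ coincide. This gives the first display verbatim.

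The second display rests on one step: showing $I(Z_i;Z_{<i}\mid X,Y)=0$ under Assumption~\ref{assump:parallel-ci}. I would use the factorization $P(Z_{1:n}\mid X,Y)=\prod_j P(Z_j\mid X,Y,b(i))$. Marginalizing out $Z_{>i}$, where each factor integrates to one, gives $P(Z_{\le i}\mid X,Y)=P(Z_i\mid X,Y)\,P(Z_{<i}\mid X,Y)$. Equivalently, I can invoke the first clause of the assumption, $P(Z_i\mid X,Y,Z_{<i})=K_{b(i)}(Z_i\mid X,Y)$, which does not depend on $Z_{<i}$. Either way $Z_i\perp Z_{<i}\mid (X,Y)$, so the conditional KL divergence defining $I(Z_i;Z_{<i}\mid X,Y)$ vanishes.

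Dropping that term yields $I(Z_i;Y\mid X,Z_{<i}) = I(Z_i;Y\mid X) - I(Z_i;Z_{<i}\mid X)$. The inequality then follows from nonnegativity of conditional mutual information, $I(Z_i;Z_{<i}\mid X)\ge 0$.

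The only delicate point, which I would flag briefly, is that the types $b(\cdot)$ are deterministic labels fixed in advance. Conditioning on them therefore adds no randomness, and all quantities are finite: they are bounded by $H(Y\mid X)$ via Theorem~\ref{thm:finite_budget_app}, so the subtraction in the rearrangement is legitimate. Otherwise the proof is routine.
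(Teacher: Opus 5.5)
Your proposal is correct and follows the paper's argument: substitute $A=Z_i$, $B=Y$, $C=X$, $D=Z_{<i}$ into Lemma~\ref{lem:three-way}, use Assumption~\ref{assump:parallel-ci} to get $I(Z_i;Z_{<i}\mid X,Y)=0$, and read off the inequality from $I(Z_i;Z_{<i}\mid X)\ge 0$. One correction to your closing remark: Theorem~\ref{thm:finite_budget_app} bounds only terms of the form $I(\cdot\,;Y\mid X)$, so it does not make the redundancy terms finite. Without the assumption, $I(Z_i;Z_{<i}\mid X)$ can exceed $H(Y\mid X)$, so the general identity tacitly needs finiteness (e.g.\ finite output alphabets). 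Under the assumption, finiteness follows instead from $I(Z_i;Z_{<i}\mid X)\le I(Z_i;Y,Z_{<i}\mid X)=I(Z_i;Y\mid X)\le H(Y\mid X)$.
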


\noindent This formalizes redundancy: previous outputs can only reduce the new information.

\paragraph{Implication: redundancy controls early saturation.}
Upper bounds identify \emph{what} limits the total information gain.
To explain \emph{when} saturation occurs, consider the incremental contribution
$\Delta_i := I(Z_i;Y\mid X,Z_{<i})$.
Eq.~\eqref{eq:delta_le_marginal} provides an explicit decomposition:
\begin{equation}
\Delta_i
= I(Z_i;Y\mid X) - I(Z_i;Z_{<i}\mid X),
\label{eq:redundancy_identity_main_app}
\end{equation}
where the \emph{redundancy term} $I(Z_i;Z_{<i}\mid X)$ quantifies how much the $i$-th output overlaps with previous outputs.
Thus, early saturation arises when repeated calls increase $I(Z_i;Z_{<i}\mid X)$, leaving little additional evidence to accumulate.
Homogeneous agents typically induce large redundancy due to similar reasoning trajectories, while heterogeneity mitigates overlap and sustains $\Delta_i$.

Since $I(Z_i;Y\mid X,Z_{<i})\ge 0$, the identity also implies
$I(Z_i;Z_{<i}\mid X)\le I(Z_i;Y\mid X)$ under Assumption~\ref{assump:parallel-ci}.

\subsubsection{Homogeneous Parallel Bound}
\label{app:homog-parallel}

\begin{proposition}[Homogeneous Parallel Upper Bound]
\label{prop:homog-parallel}
Assume $m$ parallel samples from a single type $b$ under Assumption~\ref{assump:parallel-ci}.
Then
\begin{equation}
I(Z_{1:m};Y\mid X)\le H(Y\mid X) \;\wedge\; mI_b.
\label{eq:homog-parallel}
\end{equation}
\end{proposition}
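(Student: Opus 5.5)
The bound $I(Z_{1:m};Y\mid X)\le H(Y\mid X)\wedge mI_b$ is a minimum, so I will prove the two inequalities separately and combine them.

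\emph{The $H(Y\mid X)$ term.} This follows directly from Theorem~\ref{thm:finite_budget_app}, which applies to any transcript. In particular it applies to the $m$ parallel samples from type $b$, and it needs no further assumption.

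\emph{The $mI_b$ term.} I would first expand with the chain rule (Lemma~\ref{lem:chain-rule}):
\begin{equation}
I(Z_{1:m};Y\mid X)=\sum_{i=1}^m I(Z_i;Y\mid X,Z_{<i}).
\end{equation}
Each summand is then bounded by Corollary~\ref{cor:parallel-identity}. Under Assumption~\ref{assump:parallel-ci}, the conditional-redundancy term $I(Z_i;Z_{<i}\mid X,Y)$ vanishes. This gives $I(Z_i;Y\mid X,Z_{<i})\le I(Z_i;Y\mid X)$ by \eqref{eq:delta_le_marginal}.

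The remaining step is to identify $I(Z_i;Y\mid X)$ with $I_b$ for every $i$. By Assumption~\ref{assump:parallel-ci}, each $Z_i$ with $b(i)=b$ has conditional law $K_b(\cdot\mid x,y)$ given $(X,Y)=(x,y)$. Hence the joint law of $(X,Y,Z_i)$ equals $P_{XY}\otimes K_b$, which is exactly the joint law of $(X,Y,Z^{(b)})$. Conditional mutual information depends only on this joint law, so $I(Z_i;Y\mid X)=I_b$. Summing over $i=1,\dots,m$ gives $I(Z_{1:m};Y\mid X)\le mI_b$.

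\emph{Main obstacle.} The only delicate point is that last identification: the marginal of $Z_i$ must not depend on $Z_{<i}$ in a way that changes its relation to $(X,Y)$. The assumption fixes $P(Z_i\mid X,Y,Z_{<i})=K_b$, and marginalizing over $Z_{<i}$ gives $P(Z_i\mid X,Y)=K_b$, so this step goes through.
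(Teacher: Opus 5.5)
Your proposal is correct and follows essentially the same route as the paper's proof: the chain rule, the per-term bound \eqref{eq:delta_le_marginal} obtained from the conditional-independence assumption, the identification $I(Z_i;Y\mid X)=I_b$, summation over the $m$ calls, and the finite budget to take the minimum with $H(Y\mid X)$. Your marginalization argument for $I(Z_i;Y\mid X)=I_b$ fills in a step that the paper simply asserts without justification.
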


\begin{proof}
By chain rule,
\begin{equation}
I(Z_{1:m};Y\mid X)
=\sum_{i=1}^m I(Z_i;Y\mid X,Z_{<i}).
\end{equation}
Using Eq.~\eqref{eq:delta_le_marginal} and $I(Z_i;Y\mid X)=I_b$ for all $i$,
\begin{equation}
I(Z_{1:m};Y\mid X)\le \sum_{i=1}^m I_b = mI_b.
\end{equation}
The finite budget further implies $I(Z_{1:m};Y\mid X)\le H(Y\mid X)$.
Combining yields Eq.~\eqref{eq:homog-parallel}.
\end{proof}

\subsubsection{Heterogeneous Parallel Bound}
\label{app:heterog-parallel}

\begin{theorem}[Heterogeneous Parallel Upper Bound]
\label{thm:heterog-parallel}
Consider parallel voting with configuration types $\mathcal{B}$.
Let type $b$ be sampled $m_b$ times, with total $n=\sum_{b\in\mathcal{B}} m_b$.
Then
\begin{equation}
I(Z_{1:n};Y\mid X)
\;\le\;
H(Y\mid X) \;\wedge\; \sum_{b\in\mathcal{B}} m_b\, I_b.
\label{eq:heterog-parallel}
\end{equation}
\end{theorem}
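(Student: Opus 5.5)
The argument follows the proof of Proposition~\ref{prop:homog-parallel} almost verbatim; the only new ingredient is keeping track of the type of each call. We work under Assumption~\ref{assump:parallel-ci}, which is implicit in the statement because $I_b$ is defined relative to that model. The two terms of the minimum are handled separately.

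The bound $I(Z_{1:n};Y\mid X)\le H(Y\mid X)$ is exactly Theorem~\ref{thm:finite_budget}, or Theorem~\ref{thm:finite_budget_app} in the appendix. It needs no assumption.

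For the second term, first expand with the chain rule (Lemma~\ref{lem:chain-rule}):
\begin{equation*}
I(Z_{1:n};Y\mid X)=\sum_{i=1}^n I(Z_i;Y\mid X,Z_{<i}).
\end{equation*}
Next, bound each summand using Corollary~\ref{cor:parallel-identity}. Under Assumption~\ref{assump:parallel-ci}, $I(Z_i;Z_{<i}\mid X,Y)=0$, so Eq.~\eqref{eq:delta_le_marginal} gives $I(Z_i;Y\mid X,Z_{<i})\le I(Z_i;Y\mid X)$. The corollary's derivation never used homogeneity; conditional independence given $(X,Y)$ holds across all types, so the inequality is valid for every $i$ in the mixed-type setting.

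The one step that needs care is the identification $I(Z_i;Y\mid X)=I_{b(i)}$. By Assumption~\ref{assump:parallel-ci}, the conditional law of $Z_i$ given $(X,Y)$ is $K_{b(i)}(\cdot\mid X,Y)$. The joint law of $(X,Y)$ is shared by all calls. Hence the joint law of $(X,Y,Z_i)$ coincides with that of $(X,Y,Z^{(b(i))})$, so the two conditional mutual informations are equal. With this, summing the per-call bounds and grouping calls by type gives
\begin{equation*}
\sum_{i=1}^n I_{b(i)}=\sum_{b\in\mathcal{B}} m_b I_b,
\end{equation*}
using the counts from Eq.~\eqref{eq:mb_def}.

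Finally, the transcript's information is at most each of the two quantities, so it is at most their minimum. This yields Eq.~\eqref{eq:heterog-parallel}. I expect no real obstacle. The only subtlety is checking that the redundancy identity and the type-wise identification both go through without assuming identical channels across calls.
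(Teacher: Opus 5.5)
Your proposal is correct and follows the paper's proof. The steps match: the chain rule, the per-call bound $I(Z_i;Y\mid X,Z_{<i})\le I(Z_i;Y\mid X)=I_{b(i)}$ from Eq.~\eqref{eq:delta_le_marginal} under Assumption~\ref{assump:parallel-ci}, grouping by type, and the finite budget for the $H(Y\mid X)$ term; your justification of $I(Z_i;Y\mid X)=I_{b(i)}$ via the channel $K_{b(i)}$ just spells out a step the paper states without comment.
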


\begin{proof}
Apply the chain rule:
\begin{equation}
I(Z_{1:n};Y\mid X)=\sum_{i=1}^n I(Z_i;Y\mid X,Z_{<i}).
\end{equation}
By Eq.~\eqref{eq:delta_le_marginal}, each term is bounded by $I(Z_i;Y\mid X)=I_{b(i)}$.
Summing over steps grouped by type gives $\sum_{b\in\mathcal{B}} m_b\, I_b$.
The finite budget gives the minimum with $H(Y\mid X)$.
\end{proof}

\subsection{Sequential Pipelines and Debate: Upper Bounds}
\label{app:sequential}

In sequential settings, each output conditions on the interaction history.
This invalidates conditional independence, but the chain rule remains valid.

\subsubsection{Maximal Per-Step Contribution}
\label{app:imax}

Define the maximal incremental contribution for agent configuration type $b$:
\begin{equation}
I_b^{\max}
:=
\sup_{z_{<i}}
I(Z_i;Y\mid X,Z_{<i}=z_{<i}, b(i)=b).
\label{eq:imax}
\end{equation}

\begin{proposition}[Sequential Pipeline Upper Bound]
\label{prop:sequential}
For any sequential MAS with $n$ steps,
\begin{equation}
I(Z_{1:n};Y\mid X)
\;\le\;
H(Y\mid X) \;\wedge\; \sum_{i=1}^n I_{b(i)}^{\max}.
\label{eq:sequential-bound}
\end{equation}
\end{proposition}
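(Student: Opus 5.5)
The plan is to reuse the chain-rule decomposition behind Theorem~\ref{thm:finite_budget_app} and bound each incremental term by its worst-case, history-conditioned value. By Lemma~\ref{lem:chain-rule},
\begin{equation*}
I(Z_{1:n};Y\mid X)=\sum_{i=1}^n \Delta_i,\qquad \Delta_i=I(Z_i;Y\mid X,Z_{<i}).
\end{equation*}
No conditional-independence assumption is needed. This matters because in a sequential pipeline $Z_i$ may depend arbitrarily on $Z_{<i}$, which is why Assumption~\ref{assump:parallel-ci} is dropped here.

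The key step is to show $\Delta_i\le I_{b(i)}^{\max}$ for each $i$. A conditional mutual information given the random history $Z_{<i}$ is an average over realizations of the history:
\begin{equation*}
\Delta_i=\sum_{z_{<i}}P(Z_{<i}=z_{<i}\mid X)\,I(Z_i;Y\mid X,Z_{<i}=z_{<i}),
\end{equation*}
with an integral in place of the sum for continuous transcripts, and with an expectation over $X$ as well. Each pointwise term is at most the supremum in \eqref{eq:imax} for type $b(i)$. An average of quantities each bounded by the same constant is bounded by that constant, so $\Delta_i\le I^{\max}_{b(i)}$. Summing over $i$ gives $I(Z_{1:n};Y\mid X)\le\sum_i I^{\max}_{b(i)}$. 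Grouping the calls by type gives $\sum_b m_b I_b^{\max}$, matching \eqref{eq:sequential_upper_main}.

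The other half of the minimum is Theorem~\ref{thm:finite_budget_app} itself, since $I(Z_{1:n};Y\mid X)=H(Y\mid X)-H(Y\mid X,Z_{1:n})\le H(Y\mid X)$. Taking the smaller of the two bounds finishes the argument.

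The main obstacle is making the averaging step rigorous, which has two parts:
\begin{itemize}
\item We must interpret the supremum in \eqref{eq:imax} as also covering the realization of $X$ (or else hold $X=x$ fixed throughout and average at the end). It must also be taken over histories of positive probability, so that the pointwise conditional mutual information is well defined.
\item The type assignment $b(i)$ must be deterministic, fixed by the workflow. If it were adaptive, $b(i)$ would have to be folded into the conditioning, and the bound would become a sum of $I^{\max}$ over the realized types.
\end{itemize}
I would state these conventions explicitly and then note that the rest is routine.
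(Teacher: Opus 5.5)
Your proof is correct and follows the paper's argument: chain rule, bound each $\Delta_i$ by $I_{b(i)}^{\max}$, sum over $i$, and take the minimum with the finite budget $H(Y\mid X)$; your averaging step spells out what the paper asserts as holding ``by definition.'' One simplification is available: weight by $P(Z_{<i}=z_{<i})$ and keep $X$ random inside the conditioning, exactly as in the definition of $I_b^{\max}$. Each pointwise term $I(Z_i;Y\mid X,Z_{<i}=z_{<i})$ is then precisely the quantity being maximized, so you need no supremum over realizations of $X$.
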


\begin{proof}
By chain rule,
\begin{equation}
I(Z_{1:n};Y\mid X)=\sum_{i=1}^n I(Z_i;Y\mid X,Z_{<i}).
\end{equation}
For each $i$, by definition of $I_{b(i)}^{\max}$ we have
$I(Z_i;Y\mid X,Z_{<i})\le I_{b(i)}^{\max}$.
Summing yields the stated bound, and the finite budget gives the minimum with $H(Y\mid X)$.
\end{proof}

\paragraph{Debate.}
Two-agent debate is a special case of sequential interaction and inherits the same ceiling $H(Y\mid X)$.
This formalizes why debate cannot systematically improve over voting if agents remain redundant.

\subsection{Lower Bound via Independent Evidence-Bits Coverage}
\label{app:lowerbound}

This section formalizes the ``effective channels'' view used in the main text.
It proves Theorem~\ref{thm:lb_saturated} (geometric contraction of the residual uncertainty)
and Corollary~\ref{cor:lb} (the saturated lower bound in expectation), which together imply a characteristic rapid-then-saturating improvement curve $1-e^{-\alpha K}$ emphasized in Eq.~\eqref{eq:info_curve} of the main text.

\subsubsection{Evidence Bits Model}
\label{app:evidence-bits}

\begin{assumption}[Independent Evidence Bits]
\label{assump:evidence-bits}
There exist latent variables $U=(U_1,\ldots,U_M)$ such that:
\begin{enumerate}[leftmargin=1.2em]
\item (\textbf{Sufficiency}) $H(Y\mid X,U)=0$.
\item (\textbf{Conditional independence}) $U_1,\ldots,U_M$ are independent conditioned on $X$.
\item (\textbf{Matching uncertainty scale}) $H(U\mid X)=H(Y\mid X)$.
\end{enumerate}
\end{assumption}

Assumption~\ref{assump:evidence-bits}(iii) calibrates the latent ``evidence bits'' to exactly match the
intrinsic task uncertainty. In particular, recovering all evidence bits eliminates residual uncertainty about $Y$.

\subsubsection{Fractional Coverage by Effective Channels}
\label{app:coverage}

\begin{assumption}[Fractional Evidence Coverage]
\label{assump:coverage}
Let $\tilde Z_{1:K}$ denote $K$ effective channels extracted from an MAS transcript.
For each evidence bit $U_j$ and each channel $k\in\{1,\ldots,K\}$, define a Bernoulli indicator $C_{j,k}\in\{0,1\}$:
$C_{j,k}=1$ means channel $k$ reveals $U_j$.
Assume:
\begin{enumerate}[leftmargin=1.2em]
\item $\mathbb{P}(C_{j,k}=1)=\alpha$ for some fixed $\alpha\in(0,1)$.
\item For each fixed $j$, $\{C_{j,k}\}_{k=1}^K$ are independent.
\item If $\exists k$ such that $C_{j,k}=1$, then $H(U_j\mid X,\tilde Z_{1:K})=0$.
\end{enumerate}
\end{assumption}

Assumption~\ref{assump:coverage} is a minimal complementarity model: each \emph{new effective channel} has a constant
probability $\alpha$ of covering any remaining evidence bit, independently across channels.

\subsubsection{Residual Contraction and Saturated Lower Bound}
\label{app:residual}

\begin{lemma}[Expected Geometric Decay of Residual Uncertainty]
\label{lem:residual}
Under Assumptions~\ref{assump:evidence-bits} and \ref{assump:coverage},
\begin{equation}
\EE\big[H(Y\mid X,\tilde Z_{1:K})\big]
\;\le\;
(1-\alpha)^K\,H(Y\mid X).
\label{eq:residual-decay-exp}
\end{equation}
\end{lemma}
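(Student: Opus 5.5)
The idea is to reduce the residual uncertainty about $Y$ to the residual uncertainty about the evidence bits $U$, then bound the latter bit-by-bit using the coverage indicators.

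\textbf{Step 1 (reduce $Y$ to $U$).} Sufficiency, Assumption~\ref{assump:evidence-bits}(i), gives $H(Y\mid X,U)=0$. Hence
\[
H(Y\mid X,\tilde Z_{1:K})\le H(Y,U\mid X,\tilde Z_{1:K}) = H(U\mid X,\tilde Z_{1:K})+H(Y\mid X,U,\tilde Z_{1:K}),
\]
and the last term vanishes because conditioning cannot increase the entropy $H(Y\mid X,U)=0$. Subadditivity of entropy then gives
\[
H(U\mid X,\tilde Z_{1:K})\le \sum_{j=1}^M H(U_j\mid X,\tilde Z_{1:K}).
\]

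\textbf{Step 2 (bound each bit).} I interpret the expectation $\EE$ as over the coverage pattern $C=(C_{j,k})$. Concretely, I treat the conditional entropies as computed given a realization of $C$, which is the only reading under which Assumption~\ref{assump:coverage}(iii) makes sense. Fix $j$ and let $E_j=\{\exists k: C_{j,k}=1\}$.
\begin{itemize}
\item On $E_j$, Assumption~\ref{assump:coverage}(iii) gives $H(U_j\mid X,\tilde Z_{1:K})=0$.
\item On $E_j^c$, conditioning reduces entropy, so $H(U_j\mid X,\tilde Z_{1:K})\le H(U_j\mid X)$.
\end{itemize}
Therefore $H(U_j\mid X,\tilde Z_{1:K})\le \mathbf 1[E_j^c]\,H(U_j\mid X)$. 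By Assumption~\ref{assump:coverage}(i)--(ii), $\mathbb P(E_j^c)=\prod_k(1-\alpha)=(1-\alpha)^K$. Taking expectations yields
\[
\EE\big[H(U_j\mid X,\tilde Z_{1:K})\big]\le (1-\alpha)^K H(U_j\mid X).
\]

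\textbf{Step 3 (sum over bits).} Summing over $j$ and using Assumption~\ref{assump:evidence-bits}(ii), conditional independence of the $U_j$ given $X$, we get $\sum_j H(U_j\mid X)=H(U\mid X)$. By Assumption~\ref{assump:evidence-bits}(iii) this equals $H(Y\mid X)$. Chaining with Step 1 gives \eqref{eq:residual-decay-exp}.

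\textbf{Main obstacle.} The delicate point is Step 2: making the conditioning on the random coverage pattern rigorous. One must justify that ``conditioning reduces entropy'' holds pointwise in the realization of $C$, and that $C$ is independent of $U$. That independence is needed so that the uncovered-bit entropy is still bounded by $H(U_j\mid X)$ after averaging. I would state this explicitly as the intended reading of Assumption~\ref{assump:coverage}, namely $C$ independent of $(X,U)$ with entropies computed conditionally on $C$. I would also note that only the per-$j$ independence in (ii) is needed, since the expectation is linear across $j$.
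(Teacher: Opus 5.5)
Your proposal is correct and follows essentially the same route as the paper's proof. Both arguments reduce $H(Y\mid X,\tilde Z_{1:K})$ to $H(U\mid X,\tilde Z_{1:K})$ via sufficiency, apply subadditivity, bound each bit by $\mathbf{1}[E_j^c]\,H(U_j\mid X)$ with $\mathbb{P}(E_j^c)=(1-\alpha)^K$, and sum using conditional independence and the matching-scale assumption. Your explicit reading, with entropies computed conditionally on the coverage pattern $C$ and $C$ independent of $(X,U)$, is an assumption the paper leaves implicit, and it is exactly what makes the per-bit step rigorous.
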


\begin{proof}
By Assumption~\ref{assump:evidence-bits}(i), $Y$ is a function of $(X,U)$, hence
\begin{equation}
H(Y\mid X,\tilde Z_{1:K})
\le
H(U\mid X,\tilde Z_{1:K}).
\end{equation}
Subadditivity of conditional entropy yields
\begin{equation}
H(U\mid X,\tilde Z_{1:K})
\le
\sum_{j=1}^M H(U_j\mid X,\tilde Z_{1:K}).
\end{equation}

Fix $j$. If $U_j$ is revealed by at least one effective channel, then by
Assumption~\ref{assump:coverage}(iii),
$H(U_j\mid X,\tilde Z_{1:K})=0$; otherwise,
$H(U_j\mid X,\tilde Z_{1:K})\le H(U_j\mid X)$.
The probability that $U_j$ is not revealed by any of the $K$ channels is $(1-\alpha)^K$
by Assumption~\ref{assump:coverage}(ii). Therefore,
\begin{equation}
\EE\big[H(U_j\mid X,\tilde Z_{1:K})\big]
\le
(1-\alpha)^K\,H(U_j\mid X).
\end{equation}
Summing over $j$ gives
\begin{equation}
\EE\big[H(U\mid X,\tilde Z_{1:K})\big]
\le
(1-\alpha)^K \sum_{j=1}^M H(U_j\mid X).
\end{equation}
Finally, by Assumption~\ref{assump:evidence-bits}(ii),
$H(U\mid X)=\sum_{j=1}^M H(U_j\mid X)$, and by (iii) $H(U\mid X)=H(Y\mid X)$.
Combining completes the proof.
\end{proof}

\begin{theorem}[Geometric Contraction with Effective Channels]
\label{thm:lb_saturated}
Under Assumptions~\ref{assump:evidence-bits} and~\ref{assump:coverage},
\begin{equation}
H(Y\mid X) - \EE\big[I(\tilde Z_{1:K};Y\mid X)\big]
~=~
\EE\big[H(Y\mid X,\tilde Z_{1:K})\big]
\;\le\;
(1-\alpha)^K\,H(Y\mid X).
\label{eq:lower_bound_main_app}
\end{equation}
Consequently,
\begin{equation}
\boxed{
\begin{split}
&H(Y\mid X) - \EE\!\left[I(\tilde Z_{1:K};Y\mid X)\right] \\
&\;\le\; (1-\alpha)^K \, H(Y\mid X)
\;\le\; e^{-\alpha K} \, H(Y\mid X).
\end{split}
}
\label{eq:lower_bound_main}
\end{equation}
Equivalently, the \emph{normalized residual} satisfies
$\EE[H(Y\mid X,\tilde Z_{1:K})]/H(Y\mid X)\le (1-\alpha)^K \le e^{-\alpha K}$.
\end{theorem}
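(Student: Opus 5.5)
The plan is to derive the theorem almost entirely from Lemma~\ref{lem:residual}, which already gives the contraction $\EE[H(Y\mid X,\tilde Z_{1:K})]\le(1-\alpha)^K H(Y\mid X)$. Two pieces remain: the equality on the left of \eqref{eq:lower_bound_main_app}, and the relaxation $(1-\alpha)^K\le e^{-\alpha K}$.

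\textbf{The equality.} I would use the definition of conditional mutual information, $I(\tilde Z_{1:K};Y\mid X)=H(Y\mid X)-H(Y\mid X,\tilde Z_{1:K})$. The expectation in the statement is over the coverage randomness $\{C_{j,k}\}$ of Assumption~\ref{assump:coverage}. I would treat $H(Y\mid X)$ as a constant with respect to this randomness: it depends only on the task distribution, and the coverage indicators affect only which channels reveal which bits. Taking expectations of the identity then gives
\[
H(Y\mid X)-\EE[I(\tilde Z_{1:K};Y\mid X)]=\EE[H(Y\mid X,\tilde Z_{1:K})]
\]
by linearity of expectation. The first inequality is then exactly Lemma~\ref{lem:residual}.

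\textbf{The exponential relaxation.} I would use the elementary bound $1-t\le e^{-t}$ for all real $t$, which follows from convexity of $e^{-t}$ and its tangent line at $0$. Applying it with $t=\alpha\in(0,1)$ gives $0<1-\alpha\le e^{-\alpha}$. Since both sides are nonnegative, raising to the $K$-th power preserves the inequality, so $(1-\alpha)^K\le e^{-\alpha K}$. Multiplying by $H(Y\mid X)\ge 0$ yields the boxed chain.

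\textbf{The normalized form.} Dividing by $H(Y\mid X)$ gives the normalized statement whenever $H(Y\mid X)>0$. If $H(Y\mid X)=0$, both sides vanish and the claim is trivial, with the ratio read as $0$.

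\textbf{Main obstacle.} The only delicate point is interpretive: making precise what the expectation is over. I need $H(Y\mid X)$ to be deterministic while $H(Y\mid X,\tilde Z_{1:K})$ is random through the coverage pattern. I would state this convention explicitly, consistent with how Lemma~\ref{lem:residual} is phrased, and otherwise rely on that lemma without re-deriving it.
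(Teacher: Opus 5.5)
Your proposal is correct and follows essentially the same route as the paper. Both write $I(\tilde Z_{1:K};Y\mid X)=H(Y\mid X)-H(Y\mid X,\tilde Z_{1:K})$, take expectations with $H(Y\mid X)$ treated as deterministic, invoke Lemma~\ref{lem:residual}, and relax via $(1-\alpha)^K\le e^{-\alpha K}$; your extra care (justifying $1-\alpha\le e^{-\alpha}$ before raising to the $K$-th power, and handling $H(Y\mid X)=0$ in the normalized form) only makes explicit what the paper leaves implicit.
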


\begin{proof}
By definition,
$I(\tilde Z_{1:K};Y\mid X)=H(Y\mid X)-H(Y\mid X,\tilde Z_{1:K})$.
Taking expectations yields
\begin{equation}
H(Y\mid X) - \EE[I(\tilde Z_{1:K};Y\mid X)]
=
\EE[H(Y\mid X,\tilde Z_{1:K})].
\end{equation}
Apply Lemma~\ref{lem:residual} to obtain \eqref{eq:lower_bound_main_app}.
The exponential form follows from $(1-\alpha)^K\le e^{-\alpha K}$.
\end{proof}

\begin{corollary}[Saturated Lower Bound (in Expectation)]
\label{cor:lb}
Under the same assumptions,
\begin{equation}
\boxed{
\EE\big[I(\tilde Z_{1:K};Y\mid X)\big]
\;\ge\;
H(Y\mid X)\Big(1-(1-\alpha)^K\Big)
\;\ge\;
H(Y\mid X)\Big(1-e^{-\alpha K}\Big).
}
\label{eq:lowerbound-exp}
\end{equation}
\end{corollary}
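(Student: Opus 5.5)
This is a direct rearrangement of Theorem~\ref{thm:lb_saturated}, so the plan is to derive it in one line from the residual bound and then attach the elementary exponential inequality.

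First, under Assumptions~\ref{assump:evidence-bits} and~\ref{assump:coverage}, Theorem~\ref{thm:lb_saturated} (equivalently, Lemma~\ref{lem:residual} combined with the identity $I(\tilde Z_{1:K};Y\mid X)=H(Y\mid X)-H(Y\mid X,\tilde Z_{1:K})$) gives
\begin{equation*}
H(Y\mid X)-\EE\big[I(\tilde Z_{1:K};Y\mid X)\big]\;\le\;(1-\alpha)^K H(Y\mid X).
\end{equation*}
Here $H(Y\mid X)$ is a deterministic quantity, so it passes through the expectation unchanged. Moving the expectation term to the right and the residual bound to the left yields the first inequality:
\begin{equation*}
\EE\big[I(\tilde Z_{1:K};Y\mid X)\big]\ge H(Y\mid X)\big(1-(1-\alpha)^K\big).
\end{equation*}

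For the second inequality, use $1-\alpha\le e^{-\alpha}$, which holds because $e^{t}\ge 1+t$ for all real $t$. Since $\alpha\in(0,1)$, both sides are nonnegative, so raising to the $K$-th power preserves the inequality and gives $(1-\alpha)^K\le e^{-\alpha K}$. Hence $1-(1-\alpha)^K\ge 1-e^{-\alpha K}$. Multiplying by $H(Y\mid X)\ge 0$ keeps the direction, which completes the chain.

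I expect no real obstacle. The only points to state explicitly are the interpretation of the expectation, which is over the coverage indicators $C_{j,k}$ as in Lemma~\ref{lem:residual}, and the nonnegativity of $H(Y\mid X)$ and of $1-\alpha$ used in the last two steps.
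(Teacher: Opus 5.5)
Your proposal is correct and follows the paper's own proof: rearrange $\EE[I(\tilde Z_{1:K};Y\mid X)]=H(Y\mid X)-\EE[H(Y\mid X,\tilde Z_{1:K})]$, apply Lemma~\ref{lem:residual}, then use $(1-\alpha)^K\le e^{-\alpha K}$. Your justification of the last step, via $e^{t}\ge 1+t$ together with the nonnegativity of $1-\alpha$ and $H(Y\mid X)$, only makes explicit what the paper leaves implicit.
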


\begin{proof}
Rearrange the identity
$\EE[I(\tilde Z_{1:K};Y\mid X)]
=
H(Y\mid X)-\EE[H(Y\mid X,\tilde Z_{1:K})]$
and apply Lemma~\ref{lem:residual}. The exponential form follows from
$(1-\alpha)^K\le e^{-\alpha K}$.
\end{proof}

\subsubsection{Heterogeneity Advantage as an $\alpha K$ Comparison}
\label{app:heterog_homog_gap}

This subsection provides a formal underpinning for the main-text comparison
(Corollary~\ref{cor:heterog_homog_gap_B}): heterogeneity improves expected recoverable information whenever it increases
the effective evidence term $\alpha K$.

\begin{lemma}[Monotonicity in $\alpha K$]
\label{lem:monotone_alphaK}
Define $f(t):=1-e^{-t}$ for $t\ge 0$.
Then for any $t_1,t_2\ge 0$, $t_2>t_1$ implies $f(t_2)>f(t_1)$.
\end{lemma}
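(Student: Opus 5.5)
The plan is to show that $f(t)=1-e^{-t}$ is strictly increasing on $[0,\infty)$. This reduces to the strict monotonicity of the exponential function, and there are two equivalent routes; I will give the algebraic one as the main argument and the calculus one as a check.

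\emph{Algebraic route.} For $t_2>t_1\ge 0$, write
\begin{equation*}
f(t_2)-f(t_1)=e^{-t_1}-e^{-t_2}=e^{-t_1}\bigl(1-e^{-(t_2-t_1)}\bigr).
\end{equation*}
Both factors are positive. The first, $e^{-t_1}$, is positive because the exponential is positive everywhere. For the second, $t_2-t_1>0$ gives $e^{-(t_2-t_1)}<1$, since $e^{s}>1$ for every $s>0$; for instance, $e^s\ge 1+s>1$. The product is therefore strictly positive, which gives $f(t_2)>f(t_1)$.

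\emph{Calculus route.} Equivalently, $f$ is differentiable with $f'(t)=e^{-t}>0$ for all $t$. The mean value theorem then gives
\begin{equation*}
f(t_2)-f(t_1)=f'(\xi)(t_2-t_1)>0
\end{equation*}
for some $\xi\in(t_1,t_2)$.

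There is no real obstacle. The only care needed is to keep the inequality strict, which both routes do through the strict bounds $e^{-s}<1$ for $s>0$ and $e^{-t}>0$.

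Once the lemma is established, Corollary~\ref{cor:heterog_homog_gap_B} follows directly. Set $t_1=\alpha_{\mathrm{homog}}K_{\mathrm{homog}}$ and $t_2=\alpha_{\mathrm{heterog}}K_{\mathrm{heterog}}$, and multiply the strict inequality $f(t_2)>f(t_1)$ by $H(Y\mid X)$. The resulting inequality between the two guarantees is strict provided $H(Y\mid X)>0$; if $H(Y\mid X)=0$, the two guarantees are equal.
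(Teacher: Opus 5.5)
Your proof is correct, and your calculus route is exactly the paper's argument. The paper notes $f'(t)=e^{-t}>0$ and concludes strict monotonicity, leaving the mean value theorem implicit.

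Your main algebraic route, $f(t_2)-f(t_1)=e^{-t_1}\bigl(1-e^{-(t_2-t_1)}\bigr)>0$, is a slightly more elementary alternative. It needs only positivity of the exponential and $e^{s}>1$ for $s>0$, with no differentiability.

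Your closing caveat also improves on the paper. The strict comparison of guarantees in Corollary~\ref{cor:heterog_homog_gap_B} does require $H(Y\mid X)>0$, and the paper never states this. When $H(Y\mid X)=0$ both lower bounds are zero.
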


\begin{proof}
$f'(t)=e^{-t}>0$ for all $t\ge 0$, hence $f$ is strictly increasing.
\end{proof}

\begin{corollary}[Heterogeneity Advantage from Corollary~\ref{cor:lb}]
\label{cor:heterog_homog_gap}
Consider two designs summarized by $(K_{\mathrm{homog}},\alpha_{\mathrm{homog}})$ and $(K_{\mathrm{heterog}},\alpha_{\mathrm{heterog}})$
under Assumptions~\ref{assump:evidence-bits}--\ref{assump:coverage}.
By Corollary~\ref{cor:lb}, the lower bounds on recoverable information for the two designs are:
\begin{align}
\EE[I_{\mathrm{heterog}}]
&\ge H(Y\mid X)\bigl(1-e^{-\alpha_{\mathrm{heterog}}K_{\mathrm{heterog}}}\bigr), \label{eq:lb_heterog_app}\\
\EE[I_{\mathrm{homog}}]
&\ge H(Y\mid X)\bigl(1-e^{-\alpha_{\mathrm{homog}}K_{\mathrm{homog}}}\bigr). \label{eq:lb_homog_app}
\end{align}
When $\alpha_{\mathrm{heterog}}K_{\mathrm{heterog}}>\alpha_{\mathrm{homog}}K_{\mathrm{homog}}$, the heterogeneous design enjoys a strictly higher information-recovery guarantee, since by Lemma~\ref{lem:monotone_alphaK} the function $1-e^{-t}$ is strictly increasing in $t$.
\end{corollary}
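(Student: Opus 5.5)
The statement to be proved is Corollary~\ref{cor:heterog_homog_gap}, and it follows almost directly from two results already in hand: Corollary~\ref{cor:lb} and Lemma~\ref{lem:monotone_alphaK}. The plan has three steps: establish the two lower bounds, compare them, and check strictness.

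\textbf{Step 1: the two lower bounds.} Assume each design satisfies Assumptions~\ref{assump:evidence-bits} and~\ref{assump:coverage} with its own parameters.
\begin{itemize}
\item The heterogeneous design has effective channels $\tilde Z_{1:K_{\mathrm{heterog}}}$ and coverage rate $\alpha_{\mathrm{heterog}}$.
\item The homogeneous design has $K_{\mathrm{homog}}$ channels and rate $\alpha_{\mathrm{homog}}$.
\end{itemize}
Both designs share the same task, so $H(Y\mid X)$ is a common constant. Applying Corollary~\ref{cor:lb} separately to each design gives
\[
\EE[I]\ge H(Y\mid X)\bigl(1-(1-\alpha)^K\bigr)\ge H(Y\mid X)\bigl(1-e^{-\alpha K}\bigr).
\]
This yields \eqref{eq:lb_heterog_app} and \eqref{eq:lb_homog_app} immediately. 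The only care needed is that the coverage model is instantiated per design, with its own indicators $C_{j,k}$, while the evidence bits $U$ are shared through the same task.

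\textbf{Step 2: comparing the guarantees.} Set $t_1=\alpha_{\mathrm{homog}}K_{\mathrm{homog}}$ and $t_2=\alpha_{\mathrm{heterog}}K_{\mathrm{heterog}}$. Both are nonnegative, since $\alpha\in(0,1)$ and $K\ge 0$. The hypothesis is $t_2>t_1$, so Lemma~\ref{lem:monotone_alphaK} gives $1-e^{-t_2}>1-e^{-t_1}$.

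\textbf{Step 3: strictness.} Multiplying by $H(Y\mid X)$ keeps the inequality strict only when $H(Y\mid X)>0$. I would state this nondegeneracy condition explicitly. If $H(Y\mid X)=0$, both guarantees equal zero and the claim degenerates to equality.

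There is no real obstacle here. The one interpretive point to make explicit is that the corollary compares \emph{guarantees}, meaning lower bounds, and not the actual expected information $\EE[I]$ of each design. No ordering of the true quantities is claimed.
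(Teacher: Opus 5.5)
Your proposal is correct and matches the paper's proof: apply Corollary~\ref{cor:lb} to each design separately, then use the strict monotonicity of $1-e^{-t}$ from Lemma~\ref{lem:monotone_alphaK}. The explicit condition $H(Y\mid X)>0$ you add is a worthwhile refinement the paper leaves implicit, since without it both guarantees are zero and the strict comparison fails.
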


\begin{proof}
Apply Corollary~\ref{cor:lb} to each design to obtain \eqref{eq:lb_heterog_app} and \eqref{eq:lb_homog_app}.
Since $\alpha_{\mathrm{heterog}}K_{\mathrm{heterog}}>\alpha_{\mathrm{homog}}K_{\mathrm{homog}}$ and $f(t)=1-e^{-t}$ is strictly increasing (Lemma~\ref{lem:monotone_alphaK}), the lower bound for the heterogeneous design is strictly larger than that for the homogeneous design.
\end{proof}

\subsection{Properties of the Effective Channel Count $K^*$}
\label{app:kstar}

This section proves basic properties of the label-free proxy $K^*$ used in the main text
(Section~\ref{sec:Kstar}). We restate the definition for completeness.

\paragraph{Setup.}
Given $n$ outputs, let $\hat{\mathbf{z}}_i := \mathrm{Emb}(Z_i)/\|\mathrm{Emb}(Z_i)\|_2 \in \mathbb{R}^d$ be the normalized embeddings, and let $M\in\mathbb{R}^{n\times d}$ be the embedding matrix whose $i$-th row is $\hat{\mathbf{z}}_i^\top$.
Define the cosine-similarity Gram matrix $G_{ij} := \langle \hat{\mathbf{z}}_i, \hat{\mathbf{z}}_j \rangle$ (equivalently $G=MM^\top$) and its trace-normalization
\begin{equation}
\rho := \frac{G}{\mathrm{Tr}(G)}.
\end{equation}
Let $\{\lambda_j\}_{j=1}^n$ be the eigenvalues of $\rho$.
The von Neumann entropy is
\begin{equation}
H(\rho):=-\sum_{j=1}^n \lambda_j \log_2 \lambda_j,
\end{equation}
and the effective channel count is $K^*:=2^{H(\rho)}$.

\begin{proposition}[Basic Properties of $K^*$]
\label{prop:kstar-properties}
For any nonzero embedding matrix $M$,
\begin{enumerate}[leftmargin=1.2em]
\item $1\le K^*\le n$.
\item $K^*=1$ iff $\rho$ is rank-$1$ (all embeddings are collinear up to scaling).
\item $K^*=n$ iff $\rho=\frac{1}{n}I_n$ (embeddings are orthogonal with equal norm).
\item $K^*$ is continuous in $M$ (when $\mathrm{Tr}(G)>0$) and invariant to permutation of outputs.
\end{enumerate}
\end{proposition}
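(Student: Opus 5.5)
The plan is to reduce everything to the spectrum of $\rho$ and to standard facts about Shannon entropy of a probability vector. First I would establish that $\{\lambda_j\}_{j=1}^n$ is a probability distribution. Since $G=MM^\top$, it is symmetric positive semidefinite, so every $\lambda_j\ge 0$. Because the rows $\hat{\mathbf{z}}_i$ are unit vectors, $G_{ii}=1$ and hence $\mathrm{Tr}(G)=n>0$. Then $\rho$ is well defined and $\sum_j\lambda_j=\mathrm{Tr}(\rho)=1$. More generally, for the unnormalized case covered by "nonzero $M$", positivity of $\mathrm{Tr}(G)=\|M\|_F^2$ is all that is needed. With the convention $0\log_2 0=0$, $H(\rho)$ is then the Shannon entropy of the distribution $(\lambda_1,\ldots,\lambda_n)$.

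For item 1, I invoke the standard bounds $0\le H(\rho)\le\log_2 n$. Nonnegativity holds because each term $-\lambda_j\log_2\lambda_j\ge 0$ for $\lambda_j\in[0,1]$. The upper bound follows from Jensen's inequality, or Gibbs' inequality against the uniform distribution. Exponentiating gives $1\le K^*\le n$.

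For item 2, I use the equality case of nonnegativity. $H(\rho)=0$ iff every term vanishes, i.e.\ every $\lambda_j\in\{0,1\}$; with total mass $1$ this means exactly one eigenvalue equals $1$ and the rest are $0$. Since $\rho$ is symmetric and hence diagonalizable, this is equivalent to $\mathrm{rank}(\rho)=1$. Finally $\mathrm{rank}(\rho)=\mathrm{rank}(G)=\mathrm{rank}(MM^\top)=\mathrm{rank}(M)$, and rank one means all rows are scalar multiples of a single vector, i.e.\ collinear. For unit rows these multiples are $\pm1$.

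For item 3, I use the equality case of the upper bound: $H(\rho)=\log_2 n$ iff $\lambda_j=1/n$ for all $j$. Because $\rho$ is symmetric with all eigenvalues equal, its spectral decomposition gives $\rho=\tfrac1n I_n$; the converse is immediate. Translating back, $G=\mathrm{Tr}(G)\cdot\tfrac1n I_n$ is a multiple of the identity. This says the rows of $M$ are pairwise orthogonal and have equal squared norm $\mathrm{Tr}(G)/n$, which for normalized embeddings is $1$. I expect this step to be the main place needing care. The subtlety is the passage from "all eigenvalues equal" to "$\rho$ is a scalar matrix", which uses diagonalizability by an orthogonal matrix rather than just the spectrum. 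Orthogonality of rows also requires $n\le d$, which is implied by the equality case.

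For item 4, I argue continuity as a composition. The map $M\mapsto G=MM^\top$ is polynomial. The map $G\mapsto G/\mathrm{Tr}(G)$ is continuous on $\{\mathrm{Tr}(G)>0\}$. The sorted eigenvalues of a symmetric matrix depend continuously on its entries, by Weyl's inequality $|\lambda_j(A)-\lambda_j(B)|\le\|A-B\|_2$. The function $\lambda\mapsto-\lambda\log_2\lambda$ is continuous on $[0,1]$ under the $0\log 0$ convention. Since $H$ is a symmetric function of the eigenvalues, ordering is irrelevant, and $t\mapsto 2^t$ is continuous. For permutation invariance, permuting outputs replaces $M$ by $PM$ for a permutation matrix $P$. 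Then $G$ becomes $PGP^\top$ with the same trace, so $\rho\mapsto P\rho P^\top$ is a similarity transformation with identical spectrum, and hence identical $H(\rho)$ and $K^*$.
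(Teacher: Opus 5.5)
Your proposal is correct and takes essentially the same route as the paper. Like the paper, you reduce to the Shannon entropy of the spectrum of $\rho$, use $0\le H(\rho)\le\log_2 n$ and its equality cases for items 1--3, and get continuity by composition and permutation invariance from $G\mapsto PGP^\top$. You also make explicit steps the paper leaves implicit: positive semidefiniteness of $G$ (so the eigenvalues form a probability vector), the spectral theorem to pass from a uniform spectrum to $\rho=\tfrac1n I_n$, $\mathrm{rank}(MM^\top)=\mathrm{rank}(M)$, and Weyl's inequality for continuity of the eigenvalues.
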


\begin{proof}
\textbf{(i) Bounds.}
Entropy satisfies $0\le H(\rho)\le \log_2 n$, hence
$1=2^0\le 2^{H(\rho)}\le 2^{\log_2 n}=n$.

\textbf{(ii) $K^*=1$.}
$H(\rho)=0$ iff the spectrum is $(1,0,\ldots,0)$, which holds iff $\rho$ is rank-$1$.
This corresponds to all rows of $M$ being collinear, i.e., all embeddings are identical up to scaling.

\textbf{(iii) $K^*=n$.}
$H(\rho)=\log_2 n$ iff $\lambda_j=1/n$ for all $j$, which occurs when $\rho=\frac{1}{n}I_n$.
This corresponds to $G$ being proportional to the identity, i.e., embeddings are orthogonal with equal norm.

\textbf{(iv) Continuity and permutation invariance.}
The map $M\mapsto G=MM^\top$ is continuous.
Normalization by $\mathrm{Tr}(G)$ is continuous when $\mathrm{Tr}(G)>0$.
Eigenvalues of a symmetric matrix vary continuously with the entries, and entropy is continuous on the simplex.
Permutation of outputs corresponds to $G\mapsto PGP^\top$ for a permutation matrix $P$, which preserves eigenvalues.
\end{proof}

\section{Supplementary Experiments}

\subsection{Closed-Source Model Experiments}
\label{sec:closed_source}

We extend our analysis to closed-source models (gpt-4.1-mini, gpt-5-mini) on the Formal Logic benchmark to test whether the heterogeneity advantage generalizes across model families.
Table~\ref{tab:closed_source_comparison} compares closed- and open-source models under homogeneous (Base) and heterogeneous (Heterog) configurations at $N\!=\!2$ and $N\!=\!16$.

\begin{table}[t]
\centering
\caption{Closed-source vs.\ Open-source on Formal Logic (\%). $\Delta_\text{Het}$: average Heterog gain over Base. $\Delta_N$: accuracy change from $N\!=\!2$ to $N\!=\!16$ under the Heterog configuration.}
\label{tab:closed_source_comparison}
\footnotesize
\setlength{\tabcolsep}{7pt} %
\begin{tabular}{llccccccc}
\toprule
\multirow{2}{*}{\textbf{Model}} & \multirow{2}{*}{\textbf{Method}} & \multicolumn{2}{c}{\textbf{Base}} & \multicolumn{2}{c}{\textbf{Heterog}} & \multirow{2}{*}{\textbf{$\Delta_\text{Het}$}} & \multirow{2}{*}{\textbf{$\Delta_N$}} \\
\cmidrule(lr){3-4} \cmidrule(lr){5-6}
 &  & N=2 & N=16 & N=2 & N=16 &  &  \\
\midrule
\multicolumn{8}{l}{\textit{Closed-source}} \\
\midrule
\multirow{2}{*}{gpt-4.1-mini}
 & vote   & 46.83 & 48.41 & 55.56 & 52.38 & +6.35  & $-$3.18 \\
 & debate & 46.03 & 39.68 & 50.79 & 42.86 & +3.97  & $-$7.93 \\
\midrule
\multirow{2}{*}{gpt-5-mini}
 & vote   & 0.00  & 6.35  & 35.71 & 49.21 & \textbf{+39.29} & +13.50 \\
 & debate & 0.79  & 6.35  & 34.92 & 54.76 & \textbf{+41.27} & +19.84 \\
\midrule
\multicolumn{8}{l}{\textit{Open-source}} \\
\midrule
\multirow{2}{*}{Qwen-2.5-7B}
 & vote   & 38.10 & 44.44 & 45.24 & 50.00 & +6.35  & +4.76 \\
 & debate & 30.95 & 34.13 & 25.40 & 38.10 & $-$0.79 & +12.70 \\
\midrule
\multirow{2}{*}{Llama-3.1-8B}
 & vote   & 45.24 & 42.86 & 44.44 & 54.76 & +5.55  & +10.32 \\
 & debate & 24.60 & 31.75 & 35.71 & 39.68 & +9.52  & +3.97 \\
\midrule
\multirow{2}{*}{Mistral-7B}
 & vote   & 35.71 & 42.06 & 34.92 & 41.27 & $-$0.79 & +6.35 \\
 & debate & 34.92 & 44.44 & 39.68 & 46.83 & +3.58  & +7.15 \\
\bottomrule
\end{tabular}%
\end{table}

\paragraph{Key findings.}
The results confirm that the heterogeneity advantage generalizes to closed-source models, while revealing that its magnitude and scaling behavior vary across model families.

\begin{itemize}[nosep,leftmargin=*]
    \item \textbf{Heterogeneity consistently improves over homogeneous baselines.}
    All five models exhibit positive $\Delta_\text{Het}$ in at least one interaction mechanism, confirming that the advantage is not specific to open-source settings.

    \item \textbf{Models with weaker homogeneous baselines benefit more from heterogeneity.}
    gpt-5-mini achieves near-zero accuracy under homogeneous settings (0--6\%) but reaches 35--55\% with heterogeneous prompting ($\Delta_\text{Het}$ of +39--41\%).
    In contrast, gpt-4.1-mini and the open-source models, which already achieve 25--46\% under homogeneous settings, show more modest gains (+3--10\%).

    \item \textbf{Scaling trends diverge across model families.}
    Open-source models exhibit positive scaling ($\Delta_N > 0$) under both configurations.
    gpt-4.1-mini, however, shows \emph{negative} scaling in debate: accuracy drops from 50.79\% to 42.86\% ($\Delta_N = -7.93$) even under heterogeneous settings, indicating that adding more agents can hurt when the base model is already strong.
    gpt-5-mini shows the opposite pattern: under heterogeneous settings it benefits substantially from more agents ($\Delta_N = +19.84$ for Debate), whereas its homogeneous scaling remains near-flat.
\end{itemize}

\subsection{Robustness to Embedding Model Choice}
\label{sec:embedding_robustness}

A potential concern is whether our effective channel metric $K^*$ depends critically on the choice of embedding model. To address this, we recompute $K^*$ using a different embedding model, gte-Qwen2-1.5B-instruct (1536 dimensions), and compare the results against our primary model NV-Embed-v2 (4096 dimensions). We conduct this comparison across seven datasets (ARC, Formal Logic, GSM8K, HellaSwag, Pro Medicine, TruthfulQA, WinoGrande), varying agent counts $N \in \{2, 4, 8, 12, 16\}$ and interaction mechanisms (Vote and Debate).

Since embedding dimensionality affects absolute $K^*$ values, direct comparison of raw values across models is not meaningful. Instead, we assess robustness by measuring whether the two embeddings agree on \emph{relative ordering}: within each (configuration type, dataset) pair, do both embeddings rank different (method, $N$) combinations consistently? Across all matched pairs, we observe an average Spearman correlation of $\rho = 0.91$, with over 95\% of pairs showing $\rho > 0.5$. This indicates that both embeddings consistently identify which experimental settings produce more diverse outputs, even though their absolute scales differ.

Furthermore, both embeddings yield $K^*$ metrics that positively correlate with task accuracy (NV-Embed-v2: $r = 0.40$; gte-Qwen2: $r = 0.23$), confirming that our core finding, diversity predicts performance, is not an artifact of a particular embedding choice. We use NV-Embed-v2 in the main experiments as it achieves stronger predictive power.

\subsection{Is $K^*$ More Than a Proxy for Scale and Configuration?}
\label{sec:kstar_incremental}
Since $K^*$ is computed from agent outputs whose diversity naturally varies with agent count $N$ and configuration type, a key question is whether $K^*$ captures information \emph{beyond} these design variables, or merely serves as a redundant proxy for them.
To disentangle this, we fit a baseline regression that predicts task accuracy from $N$ and configuration labels alone, then measure the incremental variance explained ($\Delta R^2$) when $K^*$ or its components are added.

\begin{table*}[h]
\centering
\caption{\textbf{Incremental Explanatory Power of Effective Channels.}
The baseline model using only agent count ($N$) and configuration labels explains little variance ($R^2 = 0.062$). Adding $K^*$ substantially improves fit ($\Delta R^2 = +0.147$), and conditioning on answer correctness ($K^*_c$) yields the largest gain ($\Delta R^2 = +0.331$), while $K^*_w$ contributes negligibly.}

\label{tab:exp1-incremental-explanatory}
\footnotesize
\setlength{\tabcolsep}{5pt}
\begin{tabular}{l|cccc}
\toprule
\textbf{Model} & \textbf{$R^2$} & \textbf{Adj.\ $R^2$} & \textbf{$\Delta R^2$} & \textbf{AIC} \\
\midrule
Baseline ($N$ + Config) & 0.062 & 0.044 & -- & 1806.6 \\
Baseline + $K^*$        & 0.209 & 0.190 & +0.147 & 1771.1 \\
Baseline + $K^*_c$      & 0.393 & 0.378 & +0.331 & 1713.0 \\
Baseline + $K^*_c + K^*_w$ & 0.396 & 0.379 & +0.334 & 1713.8 \\
Baseline + $K^*_c/K^*_w$   & 0.325 & 0.309 & +0.263 & 1736.4 \\
\bottomrule
\end{tabular}
\end{table*}

Table~\ref{tab:exp1-incremental-explanatory} reveals three findings.
First, the baseline model with only $N$ and configuration labels achieves $R^2 = 0.062$, confirming that scale and configuration alone are poor predictors of MAS performance.
Second, adding $K^*$ raises $\Delta R^2$ by $+0.147$, demonstrating that it captures structural information about output diversity that is not reducible to agent count or configuration choice.
Third, and most importantly, replacing $K^*$ with its correctness-conditioned component $K^*_c$ more than doubles the incremental gain ($\Delta R^2 = +0.331$), while further adding $K^*_w$ yields negligible improvement ($\Delta R^2$: $+0.331 \to +0.334$).
This asymmetry directly supports our central thesis: what drives MAS performance is not output diversity in general, but specifically the diversity of \emph{correct} reasoning paths. Increasing the number of distinct ways agents arrive at the right answer is far more predictive than total channel count or the diversity of incorrect responses.

\subsection{Sanity Checks: Are $K^*$--Performance Relations Accidental?}
\label{sec:kstar_sanity}

We further test whether the observed relationship between effective channels
and performance could arise by chance.
To this end, we conduct permutation-based randomization tests that preserve
the marginal distribution of accuracy while destroying any structural
association with $K^*$.

\begin{table}[h]
\centering
\caption{\textbf{Permutation Sanity Check (1000 shuffles).}
Observed correlations between effective-channel metrics and accuracy lie far
outside the null distribution, confirming that the relationship is not due to chance.}
\label{tab:exp3-permutation}
\setlength{\tabcolsep}{6pt}
\begin{tabular}{lccc}
\toprule
\textbf{Metric} & \textbf{Observed $r$} & \textbf{$z$-score} & \textbf{$p$} \\
\midrule
$K^*$            & 0.388 & 5.87 & $<$0.001 \\
$K^*_c$          & 0.535 & 7.75 & $<$0.001 \\
$K^*_c/K^*_w$    & 0.503 & 7.23 & $<$0.001 \\
\bottomrule
\end{tabular}
\end{table}

As shown in Table~\ref{tab:exp3-permutation}, all effective-channel metrics exhibit
$z$-scores well above 5 under permutation testing, with $p<10^{-3}$.
This rules out the possibility that the observed correlations arise from random
alignment or dataset-specific artifacts.
Notably, $K^*_c$ again yields the strongest signal, reinforcing the interpretation
that correct-path diversity is the dominant driver of multi-agent performance.

\begin{table*}[t]
\centering
\caption{Model Ablation on Formal Logic: Impact of Heterogeneity from $N=2$ to $N=16$}
\label{tab:logic_full_ablation}
\small
\begin{tabular}{lcccccccc}
\toprule
\multirow{2}{*}{\textbf{Base Model}} & \multirow{2}{*}{\textbf{Agents ($N$)}} & \multicolumn{3}{c}{\textbf{Vote (Round 0)}} & \multicolumn{3}{c}{\textbf{Debate (Final)}} \\
\cmidrule(lr){3-5} \cmidrule(lr){6-8}
 &  & \textbf{Homog} & \textbf{Heterog} & \textbf{$\Delta_{H-M}$} & \textbf{Homog} & \textbf{Heterog} & \textbf{$\Delta_{H-M}$} \\
\midrule
\multirow{5}{*}{Qwen-2.5-7B} 
 & 2  & 38.10\% & 45.24\% & +7.14\% & 30.95\% & 25.40\% & -5.55\% \\
 & 4  & 42.06\% & 53.97\% & +11.91\% & 30.16\% & 34.92\% & +4.76\% \\
 & 8  & 43.65\% & 50.00\% & +6.35\% & 28.57\% & 38.10\% & +9.53\% \\
 & 12 & 44.44\% & 52.38\% & +7.94\% & 31.75\% & 35.71\% & +3.96\% \\
 & 16 & 44.44\% & 50.00\% & +5.56\% & 34.13\% & 38.10\% & +3.97\% \\
\midrule
\multirow{5}{*}{Llama-3.1-8B} 
 & 2  & 45.24\% & 44.44\% & -0.80\% & 24.60\% & 35.71\% & +11.11\% \\
 & 4  & 42.86\% & 53.97\% & +11.11\% & 23.02\% & 24.60\% & +1.58\% \\
 & 8  & 41.27\% & 52.38\% & +11.11\% & 27.78\% & 35.71\% & +7.93\% \\
 & 12 & 43.65\% & 53.97\% & +10.32\% & 30.95\% & 38.89\% & +7.94\% \\
 & 16 & 42.86\% & 54.76\% & +11.90\% & 31.75\% & 39.68\% & +7.93\% \\
\midrule
\multirow{5}{*}{Mistral-7B} 
 & 2  & 35.71\% & 34.92\% & -0.79\% & 34.92\% & 39.68\% & +4.76\% \\
 & 4  & 34.92\% & 36.51\% & +1.59\% & 35.71\% & 44.44\% & +8.73\% \\
 & 8  & 32.54\% & 37.30\% & +4.76\% & 40.48\% & 38.89\% & -1.59\% \\
 & 12 & 38.89\% & 38.10\% & -0.79\% & 42.06\% & 42.86\% & +0.80\% \\
 & 16 & 42.06\% & 41.27\% & -0.79\% & 44.44\% & 46.83\% & +2.39\% \\
 \midrule
 \multirow{5}{*}{MIX} 
 & 2  & 45.24\% & 48.41\% & +3.17\% & 34.13\% & 38.89\% & +4.76\% \\
 & 4  & 47.62\% & 52.38\% & +4.76\% & 42.86\% & 53.17\% & +10.31\% \\
 & 8  & 47.62\% & 55.56\% & +7.94\% & 49.21\% & 53.17\% & +3.96\% \\
 & 12 & 48.41\% & 57.94\% & +9.53\% & 48.41\% & 54.76\% & +6.35\% \\
 & 16 & 50.00\% & 53.97\% & +3.97\% & 43.65\% & 51.59\% & +7.94\% \\
\bottomrule
\end{tabular}
\end{table*}

\subsection{Case Study: Heterogeneity Effects Across Models and Workflows}

Table~\ref{tab:logic_full_ablation} reports a comprehensive ablation study on the
Formal Logic benchmark, varying base models, agent counts ($N=2$--$16$), and
interaction mechanisms.
Across nearly all settings, heterogeneous configurations outperform homogeneous
ones, often by substantial margins.
Importantly, these gains do not arise from scaling alone.
For example, in both Vote and Debate, increasing $N$ beyond moderate values
frequently yields diminishing or unstable returns in homogeneous settings,
while heterogeneous systems maintain consistent improvements.
This pattern holds across all three base models and their mixture, indicating that
the benefit of heterogeneity is robust to model choice and interaction protocol.

\begin{table*}[h]
\centering
\caption{Formal Logic: Synergy of Model Mixing (MIX vs. Best Single Model)}
\label{tab:logic_mix_synergy}
\small
\begin{tabular}{lccccc}
\toprule
\textbf{Agents ($N$)} & \textbf{Best Single (Heterog)} & \textbf{MIX (Heterog)} & \textbf{$\Delta_{\text{MIX vs. Best}}$} & \textbf{MIX (Homog)} & \textbf{$\Delta_{\text{H-M (MIX)}}$} \\
\midrule
2  & 39.68\%  & 38.89\% & -0.79\% & 34.13\% & +4.76\% \\
4  & 44.44\%  & 53.17\% & \textbf{+8.73\%} & 42.86\% & +10.31\% \\
8  & 38.89\%  & 53.17\% & \textbf{+14.28\%} & 49.21\% & +3.96\% \\
12 & 42.86\%  & 54.76\% & \textbf{+11.90\%} & 48.41\% & +6.35\% \\
16 & 46.83\%  & 51.59\% & \textbf{+4.76\%} & 43.65\% & +7.94\% \\
\bottomrule
\end{tabular}
\end{table*}

Table~\ref{tab:logic_mix_synergy} isolates the effect of model mixing by comparing
a heterogeneous mixture (MIX) against the best-performing single model under
the same agent count.
At $N \geq 4$, MIX consistently outperforms the strongest individual model by
large margins, reaching up to +14.28\% absolute accuracy at $N=8$.

Crucially, these gains cannot be explained by model selection alone.
Even when the best single model is used with heterogeneous prompting, the MIX
configuration achieves higher performance, demonstrating genuine synergy across
models rather than simple averaging or dominance effects.

\end{document}